\documentclass{article}

\PassOptionsToPackage{numbers, compress}{natbib}



\usepackage[final]{neurips_2022}


\usepackage[utf8]{inputenc} 
\usepackage[T1]{fontenc}    
\usepackage[pagebackref,breaklinks,colorlinks,citecolor=green,linkcolor=red,urlcolor=blue]{hyperref}       
\usepackage{url}            
\usepackage{booktabs}       
\usepackage{amsfonts}       
\usepackage{nicefrac}       
\usepackage{microtype}      
\usepackage{xcolor}         
\usepackage[pdftex]{graphicx}

\usepackage{float}
\usepackage{caption}
\usepackage{subcaption}
\usepackage{xspace}
\usepackage{bm}
\usepackage{amsmath}
\usepackage{amssymb}
\usepackage{textcomp}
\usepackage{mathtools}
\usepackage{multirow}
\usepackage{array}
\usepackage{wrapfig}
\usepackage{amsthm}
\usepackage{nth}
\usepackage{dsfont}
\usepackage{algorithm}
\usepackage{algpseudocode}
\usepackage{stackengine}
\usepackage{mathrsfs}
\usepackage{tabularx}

\usepackage{listings}
\usepackage[frozencache]{minted}
\usepackage{enumitem}

\setminted[python]{ %
    linenos=true,             
    autogobble=true,          
    frame=lines,
    framesep=2mm,
    fontsize=\footnotesize,
    escapeinside=||
}

\usepackage[capitalize]{cleveref}
\crefname{section}{Sec.}{Secs.}
\Crefname{section}{Section}{Sections}
\Crefname{table}{Table}{Tables}
\crefname{table}{Tab.}{Tabs.}
\crefname{algorithm}{Alg.}{Algs.}

\captionsetup[subfigure]{subrefformat=simple,labelformat=simple}

\definecolor{green}{HTML}{00aa00}
\definecolor{blue}{HTML}{0000aa}
\definecolor{red}{HTML}{aa0000}

\usepackage[autostyle]{csquotes}
\theoremstyle{definition}
\newtheorem*{proposition}{Proposition}

\newtheorem*{definition}{Definition}
\newtheorem*{corollary}{Corollary}

\newtheorem{assumption}{Asm.}

\theoremstyle{remark}
\newtheorem*{remark}{Remark}

\def\app#1#2{%
  \mathrel{%
    \setbox0=\hbox{$#1\sim$}%
    \setbox2=\hbox{%
      \rlap{\hbox{$#1\propto$}}%
      \lower1.1\ht0\box0%
    }%
    \raise0.25\ht2\box2%
  }%
}

\newcommand{\diffH}[1]{\textcolor{green}{\rlap{$_{\uparrow #1}$}}}

\newcommand\barbelow[1]{\stackunder[1.2pt]{$#1$}{\rule{3ex}{.075ex}}}
\newcommand\barbelowshort[1]{\stackunder[1.2pt]{$#1$}{\rule{1ex}{.075ex}}}

\newcommand\barabove[1]{\stackon[1.2pt]{$#1$}{\rule{1.7ex}{.075ex}}}
\newcommand\baraboveshort[1]{\stackon[1.2pt]{$#1$}{\rule{1ex}{.075ex}}}

\DeclareMathOperator*{\argmin}{arg\,min}

\DeclarePairedDelimiter\parentheses{\lparen}{\rparen}

\DeclareMathOperator*{\MVBer}{MVB}

\DeclarePairedDelimiterX\set[1]\lbrace\rbrace{#1}

\newcommand{\MVB}[2]{\MVB \parentheses*{#1,\,#2}}

\newcommand{\smallsim}{\smallsym{\mathrel}{\sim}}

\newcommand*{\estimates}{\mathrel{\widehat=}}

\makeatletter
\DeclareRobustCommand\onedot{\futurelet\@let@token\@onedot}
\def\@onedot{\ifx\@let@token.\else.\null\fi\xspace}

\def\eg{\emph{e.g}\onedot} 
\def\ie{\emph{i.e}\onedot} 
 
\def\etc{\emph{etc}\onedot} 
\def\wrt{\emph{w.r.t}\onedot}

\newcommand{\smallsym}[2]{#1{\mathpalette\make@small@sym{#2}}}
\newcommand{\make@small@sym}[2]{%
  \vcenter{\hbox{$\m@th\downgrade@style#1#2$}}%
}
\newcommand{\downgrade@style}[1]{%
  \ifx#1\displaystyle\scriptstyle\else
    \ifx#1\textstyle\scriptstyle\else
      \scriptscriptstyle
  \fi\fi
}

\newcommand{\subalign}[1]{%
  \vcenter{%
    \Let@ \restore@math@cr \default@tag
    \baselineskip\fontdimen10 \scriptfont\tw@
    \advance\baselineskip\fontdimen12 \scriptfont\tw@
    \lineskip\thr@@\fontdimen8 \scriptfont\thr@@
    \lineskiplimit\lineskip
    \ialign{\hfil$\m@th\scriptstyle##$&$\m@th\scriptstyle{}##$\hfil\crcr
      #1\crcr
    }%
  }%
}

\newcommand{\eat}[1]{}

\newenvironment{code}
 {\RecustomVerbatimEnvironment{Verbatim}{BVerbatim}{}%
  \def\FV@BProcessLine##1{%
    \hbox{%
      \hbox to\z@{\hss\theFancyVerbLine\kern\FV@NumberSep}%
      \FancyVerbFormatLine{##1}%
    }%
  }%
  \VerbatimEnvironment
  \setbox\z@=\hbox\bgroup
  \begin{minted}{python}}
  {\end{minted}\egroup
  \leavevmode\vbox{\hrule\kern2mm\box\z@\kern2mm\hrule}}

\makeatother


\title{A Lower Bound of Hash Codes' Performance}
%




\author{%
Xiaosu Zhu\textsuperscript{1}\\
\texttt{xiaosu.zhu@outlook.com}
\And
Jingkuan Song\textsuperscript{1}\thanks{Corresponding author.}\\
\texttt{jingkuan.song@gmail.com}
\And
Yu Lei\textsuperscript{1}\\
\texttt{leiyu6969@gmail.com}
\And
Lianli Gao\textsuperscript{1}\\
\texttt{lianli.gao@uestc.edu.cn}
\And
Heng Tao Shen\textsuperscript{1,2}\\
\texttt{shenhengtao@hotmail.com}
\And
\mdseries \textsuperscript{1}Center for Future Media, University of Electronic Science and Technology of China \\
\textsuperscript{2}Peng Cheng Laboratory
}

\begin{document}

\maketitle

\begin{abstract}

    As a crucial approach for compact representation learning, hashing has achieved great success in effectiveness and efficiency. Numerous heuristic Hamming space metric learning objectives are designed to obtain high-quality hash codes. Nevertheless, a theoretical analysis of criteria for learning good hash codes remains largely unexploited. In this paper, we prove that inter-class distinctiveness and intra-class compactness among hash codes determine the lower bound of hash codes' performance. Promoting these two characteristics could lift the bound and improve hash learning. We then propose a surrogate model to fully exploit the above objective by estimating the posterior of hash codes and controlling it, which results in a low-bias optimization. Extensive experiments reveal the effectiveness of the proposed method. By testing on a series of hash-models, we obtain performance improvements among all of them, with an up to $26.5\%$ increase in mean Average Precision and an up to $20.5\%$ increase in accuracy. Our code is publicly available at \url{https://github.com/VL-Group/LBHash}.
\end{abstract}

\section{Introduction}
The explosive increase of multimedia digital content requires people to develop large-scale processing techniques for effective and efficient multimedia understanding~\cite{GLDv2,Survey1,zeng2021conceptual,zeng2022video}. Meanwhile, growing focus on carbon neutrality initiatives urges researchers to handle above issues with low power and memory consumption~\cite{HashSurvey3,XOR}. Fortunately, hashing, as a key role in compact representation learning, has demonstrated efficiency and effectiveness over the past few decades towards above demands~\cite{CompactLearning1}. By digesting multimedia contents into short binary codes, hashing is able to significantly reduce storage, boost speed, and preserve key information~\cite{DPgQ,Hash3,Hash2,Hash1,McQuic}.

In recent years, learning to hash has shown its advantages in alleviating information loss by performing data-dependent Hamming space projection with a non-linear model~\cite{Hash4,HammingMetric,SpectralHash} compared to hand-crafted hashing functions. The purpose of optimizing hash-models is to generate semantically meaningful hash codes for downstream tasks. A practical convention is to increase similarities among codes of the same category, otherwise the opposite. Such convention is validated to be effective in many applications~\cite{application1,DSDH,MultiIdxHash,GreedHash,application2,application3}.

Practices in current advances still leave two questions which remain primarily unexploited: When should we determine a hash-model for producing \textit{good} codes, and by what means to achieve this goal? Taking a comprehensive study on the above questions is essential since it not only guides us to develop high-performance algorithms in typical hash-based applications but also provides potential possibilities to inspire future works in hash learning.

Although some works~\cite{E2LSH,SpectralHash} try to answer the above questions by building connections between hash codes and common descriptors with domain knowledge, we still seek theoretical guarantees for hash codes' performance. Similar situations occur in related works~\cite{HashSurvey3,HashSurvey2,HashSurvey1}, where objectives to optimize hash-models are designed by intuitions or heuristics.

By formulating correlations among hash codes as inter-class distinctiveness and intra-class compactness, we try to address the above issues and provide a lower bound of hash codes' performance. Based on this, we could further derive from them as objectives to lift the lower bound. Moreover, to effectively train model towards such targets, estimating and further controlling the posterior of hash codes is necessary since bit-level dependence exists in posterior distribution. If we neglect this, bias will be introduced during hash-model optimization, hindering performance. Summarizing both of them, Our main contributions are summarized as follows:



1) To the best of our knowledge, we are the first to formulate a lower bound of hash codes' performance, which is directly proportional to inter-class distinctiveness and intra-class compactness among codes. Based on this, an objective is further proposed to lift this lower bound and thus enhance performance.

2) A novel posterior estimation over hash codes is proposed to perform low-bias optimization towards the above objective. As a non-linear function, it could be utilized for optimizing hash-models via gradient descent. Such plug-and-play component is able to integrate into current hash-models to boost performance.

3) Extensive experiments on three benchmark datasets reveal the effectiveness of the proposed method in two typical tasks. Specifically, we obtain an up to $26.5\%$ increase on mean Average Precision for fast retrieval. We also observe an up to $20.5\%$ increase on accuracy for hash-based recognition.

The rest of paper is organized as follows: We first briefly review current hashing methods and give preliminaries (\cref{Sec.RelatedWork,Sec.Preliminaries}). Then, lower bound on hash codes' performance by inter-class distinctiveness and intra-class compactness is proposed (\cref{Sec.DisPar}). To fully exploit such guidance, posterior estimation over multivariate Bernoulli distribution is designed for code control (\cref{Sec.DisMod}). Experiments (\cref{Sec.Exp}) are then conducted for evaluation.

\section{Related Works}
\label{Sec.RelatedWork}
This paper will focus on data-dependent learning-to-hash methods. Recent advances in building these models are validated to be effective in hash-based recognition and fast retrieval~\cite{E2LSH,OneLoss1,PQ,DSDH,GreedHash}. The typical approach utilizes a projection function to convert raw inputs into compact binary codes and conduct classification or approximate nearest neighbour search on codes for downstream tasks~\cite{application1,application2,application3}. As mentioned in introduction, to obtain such a projector, current practices suggest that we formulate it as a Hamming space metric learning problem under binary constraints. Pointwise~\cite{Hash3,DiscreteGraphHash,Pointwise1}, pairwise~\cite{DCH,HashNet,Hash2,DSDH,DRQ,DBDH}, triplet~\cite{triplet2,triplet1}, listwise~\cite{listhash,listwise1} and prototype-based~\cite{NetVLAD,AnchorHash,OneLoss1,CSQ} objectives are all studied for model training, where the core idea is to adjust similarities among samples from same / different categories.

Besides the above objectives, we should also note that hash-model optimization is NP-hard~\cite{E2LSH,DiscreteGraphHash,GreedHash,SpectralHash}. That is caused by the discrete, binarized hash code formulation. To tackle the issue, previous works propose many methods that target on following purposes: \textbf{a)} Surrogate functions that approximate hashing by continuous functions, such as $\mathit{tanh}$~\cite{HashNet}, Stochastic neuron~\cite{StochasticHash}, $\mathit{affine}$~\cite{ubgan} or Straight-Through~\cite{STE}. By adopting these, the discrete hash-model optimization is transformed into approximated but easier one via gradient descent. \textbf{b)} Non-differentiable optimization algorithm design, which focuses on solutions that directly handle discrete inputs, including Coordinate descent~\cite{SDH}, Discrete gradient estimation~\cite{GradientEst}, \etc. There is also bag of tricks which help for stable training. For example, minimizing quantization error reduces the gap between raw outputs of the model and final hash codes~\cite{DSDH}, and code-balancing alleviates the problem of producing trivial codes~\cite{OneLoss2}.

Please refer to literature reviews on above works~\cite{HashSurvey3,HashSurvey2,HashSurvey1} for comprehensive studies on learning to hash. We could see two issues exist and remain primarily unexploited in above works. The first is objective design, which is commonly proposed by intuitive and with few theoretical guarantees. The second is model optimization, where bit-level dependence is essential in code control but most works neglect it. To tackle these, we would firstly give a lower bound of hash codes' performance and utilize it as a guidance for hash-model training. Then, a novel model optimization approach is proposed in a multivariate perspective for low-bias code control.

\section{Preliminaries}
\label{Sec.Preliminaries}

Learning to hash performs data-dependent compact representation learning. Inputs $\bm{x} \in \bm{\mathcal{X}} \subseteq \bm{\mathbb{R}}^d$ are firstly transformed into $h$-dim real-valued vector $\bm{\ell}$ by $\mathcal{F}_{\bm{\theta}}$, a projection parameterized by ${\bm{\theta}}$: $\bm{\mathbb{R}}^d \xrightarrow{\mathcal{F}_{\bm{\theta}}} \bm{\mathbb{R}}^{h}$. And $\bm{\ell}$ is then binarized to $h$ bits binary code $\bm{b} \in \bm{\mathcal{B}} \subseteq \bm{\mathbb{H}}^h$:
\begin{eqnarray*}
    \bm{x} \xrightarrow{\mathcal{F}_{\bm{\theta}}} \bm{\ell} \xrightarrow{\operatorname{bin}} \bm{b},\; \mathit{where} \; \operatorname{bin}\left(\cdot\right) = \left\{
    \begin{aligned}
    +1, \;   \left(\cdot\right) \geq 0, \\
    -1, \;   \left(\cdot\right) < 0.
    \end{aligned}\right.
\end{eqnarray*}

\textbf{Typical usage of hash codes.} Common hashing-based tasks include fast retrieval~\cite{NetVLAD,isoHahs,Hash1,3DUVQ} and recognition~\cite{DSDH,HammingMetric,SDH,GreedHash}. For fast retrieval, given a query under the same distribution of $\bm{\mathcal{X}}$, we convert it into query hash code $\bm{q}$ and conduct fast approximate nearest neighbour search in $\bm{\mathcal{B}}$ to produce rank list $\mathcal{R}$. In $\mathcal{R}$, samples are organized from nearest to furthest to query. Correspondingly among all results in $\mathcal{R}$, true positives $\bm{\mathit{tp}}^{i} \in \bm{\mathit{TP}}$ indicate samples that match with query while others are false positives $\bm{\mathit{fp}}^{i} \in \bm{\mathit{FP}}$ ($i$ indicates rank). As a criterion, Average Precision (AP) is commonly adopted to determine how well the retrieved results are. It encourages true positives to have higher ranks than false positives: $\mathit{AP} = \frac{1}{\lvert\bm{\mathit{TP}}\rvert} \sum_{\forall \bm{\mathit{tp}}^{i,m} \in \bm{\mathit{TP}}}{P@i}$, where $P@i$ is the rank-$i$ precision. As for hash-based recognition, since current works formulate it as a variant of retrieval, we could still adopt AP as a criterion to indicate performance. The detailed explanation is placed in Supp. Sec. C.




\section{Lower Bound by Inter-Class Distinctiveness and Intra-Class Compactness}
\label{Sec.DisPar}
We start at an arbitrary rank list for studying the characteristics of codes in the above scenarios.
To calculate AP easily, we first introduce \textit{mis-rank} that indicates how many false positives take higher places than true positives. We refer readers to Supp. Secs. A and B for detailed description and proof.
\begin{definition}
    Mis-rank $m$ indicates how many false positives have higher ranks than a true positive, \eg, $\bm{tp}^{i, m}$ is at rank $i$, meanwhile $m \!=\! \lvert\left\{ d\left(\bm{q}, \bm{\mathit{fp}} \right) < d\left(\bm{q}, \bm{tp}^{i, m} \right) \right\}\rvert$, where $d\left(\bm{q}, \bm{\cdot}\right)$ is distance between $\bm{q}$ and a sample, and $\lvert \, \cdot \, \rvert$ is number of elements in set.
\end{definition}
\begin{remark}
    The rank-$i$ precision $P@i$ is derived to be $\nicefrac{\left(i - m\right)}{i}$.
\end{remark}

Then, average precision could be derived as:
\begin{equation*}
    \mathit{AP} = \frac{1}{\lvert\bm{\mathit{TP}}\rvert} \sum_{\forall \bm{\mathit{tp}}^{i,m} \in \bm{\mathit{TP}}}{P@i} = \frac{1}{\vert \bm{\mathit{TP}} \rvert}\sum_{\forall \bm{\mathit{tp}}^{i,m} \in \bm{\mathit{TP}}}{\frac{i - m}{i}}, \;
\end{equation*}
for all true positives in rank list. This is because the rank-$i$ precision $P@i$ for any true positive $\bm{tp}^{i, m}$ equals to $\nicefrac{\left(i - m\right)}{i}$.

From the derivation, we could immediately obtain that $\mathit{AP}$ increases $\mathit{iff} \; m$ decreases. Therefore, now we could focus on how to reduce $m$ to thereby increase average precision. Noticed that the value of $m$ is highly related to two distances, $d\left(\bm{q}, \bm{\mathit{fp}}\right)$ and $d\left(\bm{q}, \bm{\mathit{tp}}\right)$, the following proposition is raised:
\begin{proposition}
    \begin{equation*}
        \barabove{m} \propto \frac{\max{d\left(\bm{q}, \bm{\mathit{tp}}\right)}}{\min{d\left(\bm{q}, \bm{\mathit{fp}}\right)}} \; \forall \bm{\mathit{tp}} \in \bm{\mathit{TP}}, \; \bm{\mathit{fp}} \in \bm{\mathit{FP}}
    \end{equation*}
    where $\baraboveshort{\cdot}$ denotes upper bound. Correspondingly,
    \begin{equation*}
        \barbelow{\mathit{AP}} \propto \frac{\min{d\left(\bm{q}, \bm{\mathit{fp}}\right)}}{\max{d\left(\bm{q}, \bm{\mathit{tp}}\right)}} \; \forall \bm{\mathit{tp}} \in \bm{\mathit{TP}}, \; \bm{\mathit{fp}} \in \bm{\mathit{FP}}
    \end{equation*}
    where $\barbelowshort{\cdot}$ denotes lower bound.
\end{proposition}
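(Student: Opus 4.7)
The plan is to trace how the mis-rank \(m\) responds to the two extreme distances \(\max d(\bm{q},\bm{\mathit{tp}})\) and \(\min d(\bm{q},\bm{\mathit{fp}})\), and then propagate that dependence through the \(\mathit{AP}\) identity derived just above. First, I would read the definition of mis-rank as a ball count: for a true positive \(\bm{\mathit{tp}}^{i,m}\), the value \(m\) equals the number of false positives lying inside the open distance-ball of radius \(d(\bm{q},\bm{\mathit{tp}}^{i,m})\) centered at \(\bm{q}\). In this form, \(m\) is manifestly non-decreasing in the tp-distance and non-increasing in the individual fp-distances, which is the leverage the argument needs.

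Next, to upper-bound \(m\) uniformly over \(\bm{\mathit{TP}}\), I would replace \(d(\bm{q},\bm{\mathit{tp}}^{i,m})\) by \(\max d(\bm{q},\bm{\mathit{tp}})\) and observe that every false positive satisfies \(d(\bm{q},\bm{\mathit{fp}})\ge\min d(\bm{q},\bm{\mathit{fp}})\). Only false positives lying in the annulus \([\min d(\bm{q},\bm{\mathit{fp}}),\,\max d(\bm{q},\bm{\mathit{tp}})]\) can ever undercut a true positive, and the worst-case count grows with the width of this annulus. Summarising the width as the ratio \(\max d(\bm{q},\bm{\mathit{tp}})/\min d(\bm{q},\bm{\mathit{fp}})\) --- which collapses the annulus to empty precisely when the ratio does not exceed one --- yields \(\barabove{m}\propto\max d(\bm{q},\bm{\mathit{tp}})/\min d(\bm{q},\bm{\mathit{fp}})\).

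Plugging this into \(\mathit{AP}=\frac{1}{\lvert\bm{\mathit{TP}}\rvert}\sum(i-m)/i\) then delivers the companion lower bound: every summand is strictly decreasing in \(m\), so substituting the upper bound for \(m\) produces a lower bound on \(\mathit{AP}\) that moves oppositely to \(\barabove{m}\). Inverting the ratio recovers \(\barbelow{\mathit{AP}}\propto\min d(\bm{q},\bm{\mathit{fp}})/\max d(\bm{q},\bm{\mathit{tp}})\).

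The main obstacle is that ``\(\propto\)'' here is not literal proportionality --- no universal constant links the two sides, because the actual count depends on how false positives populate the annulus. The rigorous content is a pair of monotonicity statements: \(\barabove{m}\) is increasing in \(\max d(\bm{q},\bm{\mathit{tp}})\) and decreasing in \(\min d(\bm{q},\bm{\mathit{fp}})\), with the reverse for \(\barbelow{\mathit{AP}}\), together with the degenerate case \(\max d(\bm{q},\bm{\mathit{tp}})\le\min d(\bm{q},\bm{\mathit{fp}})\) that forces \(\barabove{m}=0\) and \(\barbelow{\mathit{AP}}=1\). I would therefore present the proposition as this pair of monotonicity claims, with the ratio serving as a scale-free qualitative summary that the subsequent training objectives can target.
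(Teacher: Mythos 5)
Your argument is sound at the level of rigor the proposition itself operates at, and it reaches the same conclusion by a somewhat different route than the paper. The paper proceeds through order statistics and perturbations: it sandwiches each true positive's distance between the $m$-th and $(m{+}1)$-th smallest false-positive distances, specializes to the last true positive to get $\min d(\bm{q},\bm{\mathit{fp}})_{\barabove{m}} < \max d(\bm{q},\bm{\mathit{tp}}) < \min d(\bm{q},\bm{\mathit{fp}})_{\barabove{m}+1}$, and then argues via side-by-side swaps (plus a pigeonhole step) that shrinking $\max d(\bm{q},\bm{\mathit{tp}})$ while growing $\min d(\bm{q},\bm{\mathit{fp}})$ forces $\barabove{m}$ down, finally invoking its earlier swap-based corollary that $\mathit{AP}$ moves inversely to $m$. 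You instead exploit the definition directly: since $m$ for a true positive is the count of false positives strictly inside the ball of radius $d(\bm{q},\bm{\mathit{tp}})$, the worst case $\barabove{m}$ is exactly the number of false positives in the annulus between $\min d(\bm{q},\bm{\mathit{fp}})$ and $\max d(\bm{q},\bm{\mathit{tp}})$, which is manifestly monotone in the two extreme distances and vanishes when the ratio is at most one; the $\mathit{AP}$ half then follows from monotonicity of each summand $(i-m)/i$ in $m$, without needing the swap corollary. Your counting characterization buys a shorter and more transparent argument that avoids the paper's perturbation machinery, and your explicit remark that ``$\propto$'' cannot be literal proportionality (only a pair of monotonicity claims plus the degenerate case $\barabove{m}=0$, $\barbelow{\mathit{AP}}=1$) is an honest sharpening of what both proofs actually establish; the paper's swap formulation, for its part, ties the statement more tightly to its preceding corollary and to the tightness discussion it develops later. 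One small caveat: when substituting $\barabove{m}$ into the $\mathit{AP}$ sum you should note $m\le i-1$ for the rank-$i$ true positive so the bound stays meaningful, but this does not affect the monotonicity conclusion.
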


If inputs $\bm{x}^j$ have associated labels $y^j$, the above studies are able to be further generalized. Class-wise centers, inter-class distinctiveness and intra-class compactness are introduced to formulate our final lower-bound of hash codes' performance.
\begin{definition}
    Center $\bm{c}^c \in \bm{C}$ is a representative hash code of a specific class $c$:
\begin{equation*}
    \bm{c}^c = \argmin_{\bm{b}}{\sum_{j}{d\left(\bm{b}, \bm{b}^j\right)}, \; \forall \left\{\bm{b}^j \mid y^{j} = c \right\} }
\end{equation*}
where $\bm{b}^j$ is the $j$-th sample in set $\bm{\mathcal{B}}$ and $y^{j}$ is the label of $\bm{b}^j$.
\end{definition}
\begin{definition}
    \textit{Inter-class distinctiveness} is $\min{\mathcal{D}_{\mathit{inter}}}$ where $\mathcal{D}_{\mathit{inter}}$ is a set that measures distances between all centers over $\bm{C}$: $\left\{d\left(\bm{c}^j, \bm{c}^k\right) \;\mid\; \forall \bm{c}^j,\bm{c}^k \in \bm{C} \right\}$. In contrast, \textit{intra-class compactness} is $\nicefrac{1}{\max{\mathcal{D}_{\mathit{intra}}}}$, where $\mathcal{D}_{\mathit{intra}} = \left\{d\left(\bm{b}^j, \bm{c}^c\right) \;\mid\; \forall y^{j} = c \right\}$ measures the distances among samples of the same class to their corresponding center.
\end{definition}
Combining above propositions and definitions, we could reach:
\begin{proposition}
\begin{equation}
\label{Eq.Hero}
    \barbelow{\mathit{AP}} \propto \frac{\min{d\left(\bm{q}, \bm{\mathit{fp}}\right)}}{\max{d\left(\bm{q}, \bm{\mathit{tp}}\right)}} \geq \mathit{const} \cdot \frac{\min{\mathcal{D}_{\mathit{inter}}}}{\max{\mathcal{D}_{\mathit{intra}}}}.
\end{equation}
\end{proposition}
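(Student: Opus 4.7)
The first part of the proposition (the proportionality $\barbelow{\mathit{AP}} \propto \nicefrac{\min d(\bm{q}, \bm{\mathit{fp}})}{\max d(\bm{q}, \bm{\mathit{tp}})}$) is just a restatement of the earlier proposition, so the real content is the inequality
\begin{equation*}
    \frac{\min d\left(\bm{q}, \bm{\mathit{fp}}\right)}{\max d\left(\bm{q}, \bm{\mathit{tp}}\right)} \;\geq\; \mathit{const} \cdot \frac{\min \mathcal{D}_{\mathit{inter}}}{\max \mathcal{D}_{\mathit{intra}}}.
\end{equation*}
My plan is to bound numerator and denominator separately via the triangle inequality on the Hamming metric, then divide. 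Let $c$ be the class of $\bm{q}$, so any true positive $\bm{\mathit{tp}}$ has label $c$, and let $c'$ be the class of an arbitrary false positive $\bm{\mathit{fp}}$, with corresponding centers $\bm{c}^c, \bm{c}^{c'} \in \bm{C}$.

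First, I would upper-bound $\max d(\bm{q}, \bm{\mathit{tp}})$. By the triangle inequality and the definitions of $\mathcal{D}_{\mathit{intra}}$ and intra-class compactness, for every true positive $\bm{\mathit{tp}}$,
\begin{equation*}
    d(\bm{q}, \bm{\mathit{tp}}) \;\leq\; d(\bm{q}, \bm{c}^c) + d(\bm{c}^c, \bm{\mathit{tp}}) \;\leq\; 2 \max \mathcal{D}_{\mathit{intra}},
\end{equation*}
since both $\bm{q}$ and $\bm{\mathit{tp}}$ lie in the class-$c$ cluster centered at $\bm{c}^c$. Taking the maximum over $\bm{\mathit{tp}} \in \bm{\mathit{TP}}$ preserves the bound. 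Next, I would lower-bound $\min d(\bm{q}, \bm{\mathit{fp}})$ by applying the reverse triangle inequality twice: for any false positive $\bm{\mathit{fp}}$,
\begin{equation*}
    d(\bm{q}, \bm{\mathit{fp}}) \;\geq\; d(\bm{c}^c, \bm{c}^{c'}) - d(\bm{q}, \bm{c}^c) - d(\bm{\mathit{fp}}, \bm{c}^{c'}) \;\geq\; \min \mathcal{D}_{\mathit{inter}} - 2 \max \mathcal{D}_{\mathit{intra}}.
\end{equation*}

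Dividing the two bounds yields
\begin{equation*}
    \frac{\min d(\bm{q}, \bm{\mathit{fp}})}{\max d(\bm{q}, \bm{\mathit{tp}})} \;\geq\; \frac{\min \mathcal{D}_{\mathit{inter}} - 2 \max \mathcal{D}_{\mathit{intra}}}{2 \max \mathcal{D}_{\mathit{intra}}} \;=\; \frac{1}{2}\cdot \frac{\min \mathcal{D}_{\mathit{inter}}}{\max \mathcal{D}_{\mathit{intra}}} - 1,
\end{equation*}
which is the target bound with constant $\nicefrac{1}{2}$, up to the additive $-1$. The main obstacle is precisely this additive offset: to match the clean multiplicative form in the statement I would argue that the interesting regime is exactly when intra-class compactness is large relative to inter-class distinctiveness (otherwise the trivial lower bound of $0$ suffices), so that $\min \mathcal{D}_{\mathit{inter}} \gg \max \mathcal{D}_{\mathit{intra}}$ and the $-1$ can be absorbed into the proportionality constant. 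Equivalently, one can tighten to $\mathit{const} = \nicefrac{1}{4}$ whenever $\min \mathcal{D}_{\mathit{inter}} \geq 4 \max \mathcal{D}_{\mathit{intra}}$, which justifies using the symbol $\geq \mathit{const}\cdot$ in the final display. The rest is just composing this with the previous proposition to transfer the bound back to $\barbelow{\mathit{AP}}$.
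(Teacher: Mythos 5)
Your proof takes essentially the same route as the paper: bound $\max d\left(\bm{q}, \bm{\mathit{tp}}\right)$ by $2\max{\mathcal{D}_{\mathit{intra}}}$ via the triangle inequality through the class center, bound $\min d\left(\bm{q}, \bm{\mathit{fp}}\right)$ from below using the centers, and divide. If anything you are more careful than the paper on the false-positive side: the paper merely asserts \enquote{similarly, $\min d\left(\bm{q}, \bm{\mathit{fp}}\right) \geq \mathit{const} \cdot \min{\mathcal{D}_{\mathit{inter}}}$}, whereas your reverse-triangle computation correctly exposes the additive $-2\max{\mathcal{D}_{\mathit{intra}}}$ term and the separation condition (\eg $\min{\mathcal{D}_{\mathit{inter}}} \geq 4\max{\mathcal{D}_{\mathit{intra}}}$) needed to make the bound genuinely multiplicative --- a caveat the paper's \enquote{similarly} silently folds into its $\mathit{const}$.
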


This proposition reveals how AP is affected by the above two inter- and intra- factors. Actually, it could cover common scenarios as a criterion of hash codes' performance, \eg, fast retrieval and recognition, which will be explained in supplementary materials. Intuitively, lifting this lower bound would enhance performance, formulating the following objective:
\begin{align}
    \label{Eq.MaxMin}
    \mathit{maximize} &\min{\mathcal{D}_{\mathit{inter}}}, \\
    \label{Eq.MinMax}
    \mathit{minimize} &\max{\mathcal{D}_{\mathit{intra}}}, \\
    \mathit{s.t.}\; \bm{\mathcal{X}} \subseteq \bm{\mathbb{R}}^d &\xrightarrow{\mathcal{F}_{\bm{\theta}},\;\operatorname{bin}} \bm{\mathcal{B}} \subseteq \bm{\mathbb{H}}^h.  \nonumber
\end{align}
Inspired by \cite{CSQ}, we give a specific solution to implement \cref{Eq.MaxMin,Eq.MinMax} under supervised hashing by firstly defining distance maximized class-specific centers and then shrinking hash codes to their corresponding centers. We will validate the effectiveness of such a solution in experiments. It is worth noting that achieving above goal is an open topic, including unsupervised or other complex scenarios, which we leave for future study. Next, to optimize hash-models with the above supervision, we introduce posterior estimation over hash codes and then control it.



\section{Posterior Estimation for Code Control}
\label{Sec.DisMod}
It is essentially hard to minimize the distance between a hash code and a target to achieve \cref{Eq.MinMax} for two reasons. Firstly, the posterior distribution of $\bm{\mathcal{B}}$ given inputs $\bm{\mathcal{X}}$ is formulated to be under multivariate Bernoulli distribution: $p\left(\bm{\mathcal{B}} \mid \bm{\mathcal{X}} \right) \smallsim \MVBer$. If we do not take correlations among different variables into account, optimization on $\bm{\mathcal{B}}$ is biased. Secondly, dealing with hash codes is a $\left\{-1, +1\right\}$ discrete optimization, which is not trivial to handle. Therefore, we try to organize \cref{Eq.MinMax} as a Maximum Likelihood Estimation (MLE) to tackle the above issues. Considering the definition of Hamming distance between an arbitrary hash code $\bm{b}$ and its target $\bm{t}$:
\begin{equation*}
    d\left(\bm{b}, \bm{t}\right) = \sum_i^h{\mathds{1}\left\{ \bm{b}_i \neq \bm{t}_i \right\}} = \sum_i^h{\mathds{1}\left\{ \bm{\ell}_i \bm{t}_i < 0 \right\}}
\end{equation*}
where $\mathds{1}$ is the characteristic function and $\left(\bm{\cdot}\right)_i$ indicates value on the $i$-th dimension. Since Hamming distance measures how many bits are different between two codes, the probability of $d\left(\bm{b}, \bm{t}\right) = \delta$ can be formulated as:
\begin{equation*}
    p\left(d\left(\bm{b}, \bm{t}\right) = \delta\right) =  \sum_{\forall i \in {h \choose \delta},\; j \notin {h \choose \delta}}\left\{p\left( \bm{b}_i \neq \bm{t}_i, \; \bm{b}_j = \bm{t}_j\right)\right\}.
\end{equation*}
Therefore, $\bm{b}$ and $\bm{t}$ have distance $\delta \; \mathit{iff} \; p\left(d\left(\bm{b}, \bm{t}\right) = \delta\right)$ is maximized. However, to precisely calculate it is difficult, since it involves the joint probability of $\bm{b}$. Therefore, we try to estimate all joint probabilities by adopting a surrogate model $\mathcal{P}_{\bm{\pi}}$ parameterized by $\bm{\pi}$ to perform estimation:
\begin{equation}
    \label{Eq.Projection}
    \bm{\ell} \xrightarrow{\mathcal{P}_{\bm{\pi}}} \bm{o},\; \mathit{where} \; \bm{o} \in \bm{O} \subseteq \mathbb{R}^{2^h}
\end{equation}
where $\bm{o}$ is the probabilities of a Categorical distribution $p\left(\bm{O} \mid \bm{\mathcal{X}}\right) \estimates p\left(\bm{\mathcal{B}} \mid \bm{\mathcal{X}}\right)$, which contains $2^h$ entries. Each entry of $\bm{o}$ estimates one of a specific joint probability by feeding $\bm{\ell}$ into $\mathcal{P}_{\bm{\pi}}$:
\begin{equation*}
    \bm{o}_i \estimates p\left( \left(\bm{b}_k > 0\right)^{\mathds{1}\left\{\bm{\iota}_k = 1\right\}}, \left(\bm{b}_{k} < 0\right)^{\mathds{1}\left\{\bm{\iota}_k = 0\right\}} \right), \; {}^{1 \leq k \leq h,}_{0 \leq i \leq 2^h -1}
\end{equation*}
where $\bm{\iota}$ is the $h$ bits binary representation of $i$. For example, $\bm{o}_3 = p\left(\bm{b} = \left(\text{-1-1+1+1}\right)\right)$ when $h = 4$. To train $\mathcal{P}_{\bm{\pi}}$, we perform Maximum Likelihood Estimation (MLE) over $p\left(\bm{O} \mid \bm{\mathcal{X}}\right)$:
\begin{equation}
\label{Eq.SurrogateLoss}
\begin{split}
    L_{\bm{\pi}}\left(\mathcal{P}_{\bm{\pi}};\bm{o}\right) &= -\log{ \bm{o}_i }, \\
    \mathit{where} \; i &= \sum_{k=1}^{h}{\mathds{1}\left\{\bm{b}_k > 0 \right\} \cdot 2^{k - 1}}.
\end{split}
\end{equation}

If $\mathcal{P}_{\bm{\pi}}$ is ready for estimation, we could directly adopt it as a nice non-linear function to maximize $p\left(d\left(\bm{b}, \bm{t}\right) = k\right)$.
Note that to realize \cref{Eq.MinMax}, we want $\bm{b}$ and its corresponding center $\bm{c}$ are closest, \ie maximize $p\left(d\left(\bm{b}, \bm{c}\right) = 0\right)$. So, we could reformulate this maximization as another MLE:
\begin{equation}
\begin{split}
    L_{\bm{\theta}}\left(\mathcal{F}_{\bm{\theta}}; \bm{b}\right) &= -\log{p\left(d\left(\bm{b}, \bm{c}\right) = 0\right)}, \\
    \mathit{where}\; p\left(d\left(\bm{b}, \bm{c}\right) = 0\right) &= p\left( \bm{b}_i = \bm{c}_i, 1 \leq i \leq h\right),
\end{split}
\label{Eq.MLE}
\end{equation}
which could be optimized by calculating surrogate gradients of \cref{Eq.MLE} through $\mathcal{P}_{\bm{\pi}}$:
\begin{equation}
\label{Eq.Gradient}
    \frac{\hat{\partial{L_{\bm{\theta}}}}}{\partial{{\bm{\theta}}}} = \frac{\partial\left\{-\log{\bm{o}_i}; \; i = \textstyle{\sum}_{k=1}^{h}{\mathds{1}\left\{\bm{c}_k > 0 \right\} \cdot 2^{k - 1}}\right\}}{\partial{\bm{\pi}}} \frac{\partial{\bm{\pi}}}{\partial{\bm{\ell}}} \frac{\partial{\bm{\ell}}}{\partial{{\bm{\theta}}}} \estimates \frac{\partial{L_{\bm{\theta}}}}{\partial{\bm{\theta}}}.
\end{equation}

Unfortunately, such joint probability estimation on $\MVBer$ requires $\mathcal{O}\left(2^h\right)$ complexity, which is not flexible when $h$ is large. By adopting the idea of block code~\cite{BlockCode}, the above estimation is able to perform on long hash bits by separating hash codes into a series of blocks. Specifically, any $h$ bits hash codes can be split into $u$ blocks while each block consumes $\nicefrac{h}{u}$ bits. Correspondingly, $u$ independent surrogate networks $\mathcal{P}_{\bm{\pi}_i}, 1 \leq i \leq u$ are adopted to perform the above estimation and back-propagation simultaneously. With such decomposition, the whole problem is transformed into $u$ sub-problems with a reduced complexity $\mathcal{O}\left(2^h\right) \rightarrow \mathcal{O}\left(u \cdot 2^{\nicefrac{h}{u}}\right)$.

As a summarization all of things, overall optimization via gradient descent is placed in \cref{Alg.Main}. We first perform \cref{Eq.MaxMin} step by using the pre-defined centers and then perform \cref{Eq.MinMax} step by back-propagation via \cref{Eq.Gradient}. Two models $\mathcal{F}_{\bm{\theta}}, \mathcal{P}_{\bm{\pi}}$ are optimized with learning rate $\eta_1, \eta_2$ respectively.

\begin{algorithm}
\caption{One of implementations under supervised circumstance.}
\label{Alg.Main}

\begin{algorithmic}[1]
\Procedure{Train}{$\mathcal{F}_{\bm{\theta}},\; \mathcal{P}_{\bm{\pi}}$}\Comment{Training procedure of two models $\mathcal{F}_{\bm{\theta}},\;\mathcal{P}_{\bm{\pi}}$.}
\State Generate class-specific centers $\bm{c} \in \bm{C}, \lvert \bm{C} \rvert = \text{Class-num}$; \Comment{(\cref{Eq.MaxMin}).}
\Repeat\Comment{Main training loop.}
\State Sample $\bm{x}$ from $\bm{\mathcal{X}}$ with label $y$;
\State $\bm{\ell} = \mathcal{F}_{\bm{\theta}}\left(\bm{x}\right)$;
\State $\bm{o} = \mathcal{P}_{\bm{\pi}}\left(\bm{\ell}\right)$;
\State $\bm{\pi} \leftarrow \bm{\pi} - \eta_1 \frac{\partial{L_{\bm{\pi}}}}{\partial{\bm{\pi}}}$; \Comment{(\cref{Eq.SurrogateLoss}).}
\State ${\bm{\theta}} \leftarrow {\bm{\theta}} - \eta_2 \frac{\hat{\partial{L_{\bm{\theta}}}}}{\partial{{\bm{\theta}}}}$ with corresponding center $\bm{c}$; \Comment{\cref{Eq.MinMax}, \cref{Eq.Gradient}.}
\label{Alg.No8}
\Until{Total epoch exceeds;}
\State \textbf{return} $\mathcal{F}_{\bm{\theta}}$; \Comment{Optimized hash-model $\mathcal{F}_{\bm{\theta}}$}
\EndProcedure
\end{algorithmic}
\end{algorithm}

\section{Experiments}
\label{Sec.Exp}
We conduct extensive experiments on three benchmark datasets to confirm the effectiveness of our proposed method. To make fair comparisons, we first provide experiments setup.

\subsection{Setup}
Our experiments focus on performance comparisons on two typical hash-based tasks, fast retrieval and recognition, with and without integrating our proposed method. The general evaluation pipeline is to train models in the training split, then hash all samples in the base split. Then for retrieval, any hashed queries in query split are used to get rank lists of the base split from nearest to farthest according to Hamming distance. As for recognition, we adopt a $k$NN classifier or a linear model to produce queries' predictions.

\begin{wraptable}{R}{0.5\textwidth}
\caption{Total training time with different variants, where we substitute our surrogate model with BCE loss from \cite{CSQ} or Cauchy loss from \cite{DCH}.}
\label{Tab.Time}
\centering
\begin{tabular}{@{}rccc@{}}
\toprule
\multirow{2}{*}{Method} & \multicolumn{3}{c}{Training time per epoch (s)} \\ \cmidrule(lr){2-4}
                        & $16$ bits    & $32$ bits    & $64$ bits     \\ \midrule
BCE                     & $38.13$  & $38.20$  & $38.52$  \\
Cauchy                  & $38.21$  & $38.30$  & $38.42$  \\
Ours                    & $38.95$  & $39.78$  & $40.84$  \\
\bottomrule
\end{tabular}
\end{wraptable}

\subsubsection{Datasets}
Our experiments are conducted on three datasets, varying in scale, variety and perplexity to validate methods' performance in different scenarios. Both single-label and multi-label datasets are included. We follow previous works to generate three splits, which are detailed below:

\textbf{CIFAR-10~\cite{CIFAR}} is a single-label $10$-class dataset. The whole dataset contains $6,000$ images for each class. Following~\cite{HashNet}, we split the dataset into $500\; \mid \;5,400\; \mid \;100$ for each class randomly as train, base and query splits, respectively.

\textbf{NUS-WIDE~\cite{NUSWIDE}} consists of $81$ labels and images may have one or more labels. We follow previous works~\cite{HashNet} to pick the most frequent $21$ labels and their associated images ($195,834$) for experiments. Specifically, $193,734$ images are randomly picked to form the base split while remaining $2,100$ images are adopted for queries. $10,500$ images are randomly sampled from the base split for training models.

\textbf{ImageNet~\cite{ImageNet}} is a large-scale dataset consists of $1,000$ classes. To conduct experiments, we follow~\cite{HashNet} to pick a subset of $100$ classes where all images of these classes in the training set / validation set are as base split / query split respectively ($128,503\; \mid \;4,983$ images). We then randomly sample $100$ images per class in the base split for training.

\begin{figure*}[t]
    \centering
     \begin{minipage}[b][][b]{0.48\textwidth}
         \centering
         \includegraphics[width=\linewidth]{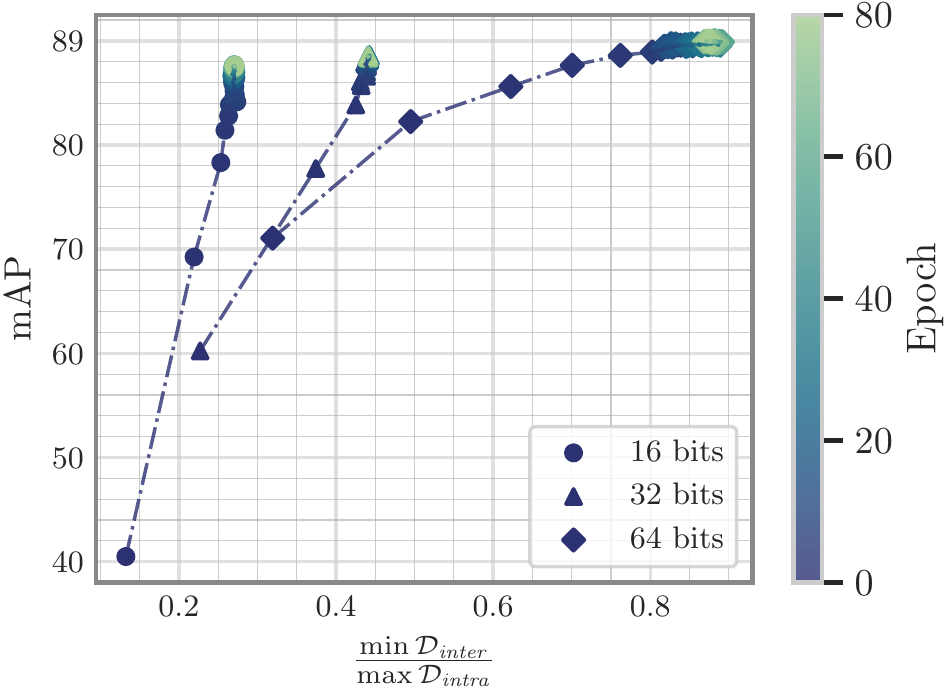}
         \caption{mAP \wrt $\frac{\min{\mathcal{D}_{\mathit{inter}}}}{\max{\mathcal{D}_{\mathit{intra}}}}$ per epoch on ImageNet $64$ bits during training. These two values have a positive relationship.}
         \label{Fig.Trace}
     \end{minipage}
     \hfill
     \begin{minipage}[b][][b]{0.48\textwidth}
         \centering
         \includegraphics[width=\linewidth]{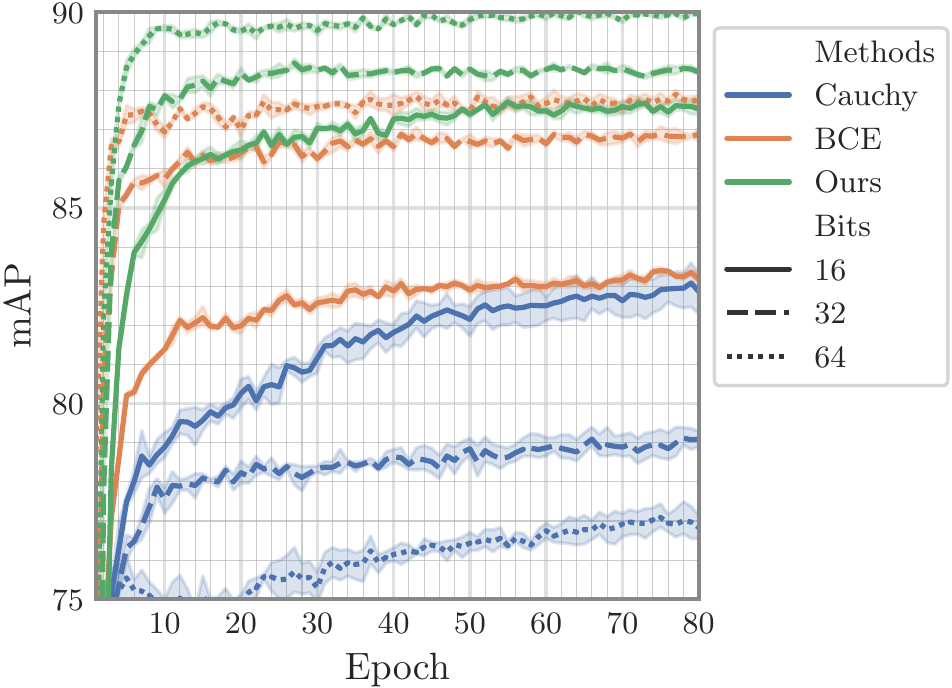}
         \caption{Convergence curve of three variants on ImageNet $64$ bits. Our $\mathcal{P}_{\bm{\pi}}$ helps for achieving higher performance than other two.}
         \label{Fig.Convergence}
     \end{minipage}
     \hfill
\end{figure*}

\subsubsection{Implementation Details}
Our method is able to integrate into common hash-models. Due to the limitation of computation resources, we choose a series of representative deep hashing methods for comparison, including \textbf{HashNet}~\cite{HashNet}, \textbf{DBDH}~\cite{DBDH}, \textbf{DSDH}~\cite{DSDH}, \textbf{DCH}~\cite{DCH}, \textbf{GreedHash}~\cite{GreedHash} and \textbf{CSQ}~\cite{CSQ}. When our method is integrated, we append a subscript $\left(\cdot\right)_D$ to the original methods' name. All methods adopted for experiments are implemented from a public benchmark with PyTorch~\cite{PyTorch}.\footnote{\url{https://github.com/swuxyj/DeepHash-pytorch}} For fair comparisons, we conduct experiments with the same backbone (ResNet-50~\cite{ResNet}) and hyper-parameters for all tested methods. We adopt Adam~\cite{Adam} optimizer with default configuration and learning rate of our method $\eta_1 = \eta_2 = 1e^{-3}$ for training. For multi-label datasets, we simply modify \cref{Alg.Main} \cref{Alg.No8} with the sum of multiple losses. Block number $u$ of $\mathcal{P}$ is set to $\nicefrac{\mathit{bits}}{8}$. For example, if the length of hash code is $64$, there will be $8$ sub-models $\mathcal{P}_{\bm{\pi}_1} \smallsim \mathcal{P}_{\bm{\pi}_8}$ trained in parallel. To mitigate randomness, we report average performance for $5$ runs on all experiments. The evaluation metric adopted for retrieval is mean Average Precision (mAP@$R$) where $R = 54,000\; \mid \;5,000\; \mid \;1,000$ on CIFAR-10, NUS-WIDE, ImageNet, respectively. For recognition performance, if methods have an auxiliary classification branch, we directly use its predictions from it. Otherwise, we adopt a $k$NN classifier built on base split and vote with $100$ nearest neighbours.

\begin{wrapfigure}{R}{0.5\linewidth}
    \centering
    \includegraphics[width=0.9\linewidth]{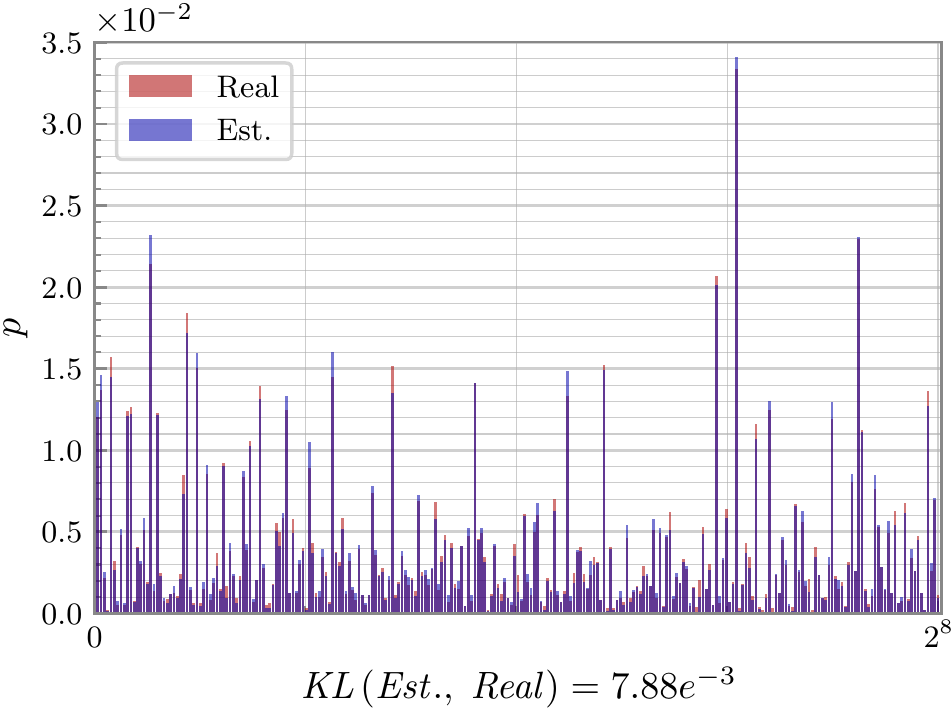}
    \newline\newline
    \begin{tabular*}{0.65\linewidth}{@{}r@{\extracolsep{\fill}}l@{}}
    \toprule
    Method               & $\mathit{KL}_{\mathit{real}}$  \\ \midrule
    Na{\"i}ve            & 0.538         \\
    Ours                 & 0.008         \\ \bottomrule
    \end{tabular*}
    \caption{Multivariate Bernoulli estimation by our proposed $\mathcal{P}_{\bm{\pi}}$, and KL-divergence comparisons with na{\"i}ve independent estimation.}
    \label{Fig.Toy}
\end{wrapfigure}

\subsection{Ablation Study}
\label{Sec.Ablation}
In the ablation study, we try to reveal thecorrectness of our proposed objectives \cref{Eq.MaxMin,Eq.MinMax}, and the effectiveness of our proposed posterior estimation model $\mathcal{P}_{\bm{\pi}}$. We answer the following questions by conducting experiments on ImageNet to demystify the above concerns. We focus on retrieval performance in the ablation study, while we observe similar results on recognition.

\textbf{Is hash-model's performance lower-bounded by \cref{Eq.Hero}?} Validating correctness of \cref{Eq.Hero} is important, but it is essentially hard to conduct experiments to confirm it. Nevertheless, we still reveal the correlation between model performance and inter-class distinctiveness / intra-class compactness by tracking mAP \wrt $\frac{\min{\mathcal{D}_{\mathit{inter}}}}{\max{\mathcal{D}_{\mathit{intra}}}}$ during training. To calculate $\mathcal{D}_{\mathit{inter}}$ and $\mathcal{D}_{\mathit{intra}}$, we first hash all samples from the base split and calculate their centers of them over all classes. The $99.9$ percentile of $\min{\mathcal{D}_{\mathit{inter}}}$ and $\max{\mathcal{D}_{\mathit{intra}}}$ is picked to avoid outliers. We conduct tracking on CSQ$_D$ with $16, 32, 64$ bits per epoch to draw \cref{Fig.Trace}. As the figure shows, lines go from lower left to upper right with small fluctuations, which indicates the linear relationship between mAP and $\frac{\min{\mathcal{D}_{\mathit{inter}}}}{\max{\mathcal{D}_{\mathit{intra}}}}$ on all bit-lengths. This may partially confirm our theoretical analysis in \cref{Sec.DisPar}.

\textbf{Convergence speed and efficiency of $\mathcal{P}_{\bm{\pi}}$.} To verify the convergence speed and efficiency of our posterior estimation model $\mathcal{P}_{\bm{\pi}}$, we test it by substituting with two variants: BCE from \cite{CSQ} and Cauchy loss from \cite{DCH}. mAP is evaluated per epoch for three variants with three bit-lengths $16, 32, 64$ and the convergence curve is plotted in \cref{Fig.Convergence}. From the figure, we could see that ours and BCE's convergence speed are similar, while Cauchy is slightly slow. Meanwhile, our method has continuously higher performance than other two after $\smallsim\!10$ epochs for all bit-lengths, which is potentially due to our method could perform low-bias optimization. Note that when training with Cauchy loss, performance is instead lower when codes are longer. This may be caused by the inner-product based Cauchy loss could not handle long bits. We also measure averaged training time per epoch of all variants on a single NVIDIA RTX 3090, which is placed in \cref{Tab.Time}. In this table, our model does not consume significant longer time than others to train. The above observations reveal the efficiency of our proposed method. It could be a drop-in replacement without introducing significant overhead. Moreover, convergence speed and efficiency will not drop when the bit-length becomes long, which reveals the flexibility of the blocked code design.

\textbf{Could $\mathcal{P}_{\bm{\pi}}$ estimate multivariate Bernoulli distribution?} We design an extra toy experiments to validate the ability of estimating multivariate Bernoulli distribution by our surrogate model $\mathcal{P}_{\bm{\pi}}$. Specifically, $8$ bits $\MVBer$ is generated with randomly $256$ joint probabilities. We then take $10,000$ samples from it as inputs to train $\mathcal{P}_{\bm{\pi}}$. The model will further estimate distribution by feeding another $100$ samples and taking the mean of all $\bm{o}$s (\cref{Eq.Projection}) as result. The estimated distribution is evaluated by Kullback–Leibler divergence~\cite{KLDiv} ($\mathit{KL}$) with real distribution, as well as bar plot, demonstrated in \cref{Fig.Toy}. As the figure shows, our predicted joint probabilities almost cover real probabilities and $\mathit{KL}$ is low. As a comparison, if we estimate real distribution with the product of edge probabilities directly ($\mathit{w/o}$ correlations between variables), $\mathit{KL}$ will be significantly increased (row $1$ in table). Such toy experiment reveals that our method is better for estimating $\MVBer$s than the na{\"i}ve one.

\begin{wrapfigure}{R}{0.5\linewidth}
    \centering
    \includegraphics[width=\linewidth]{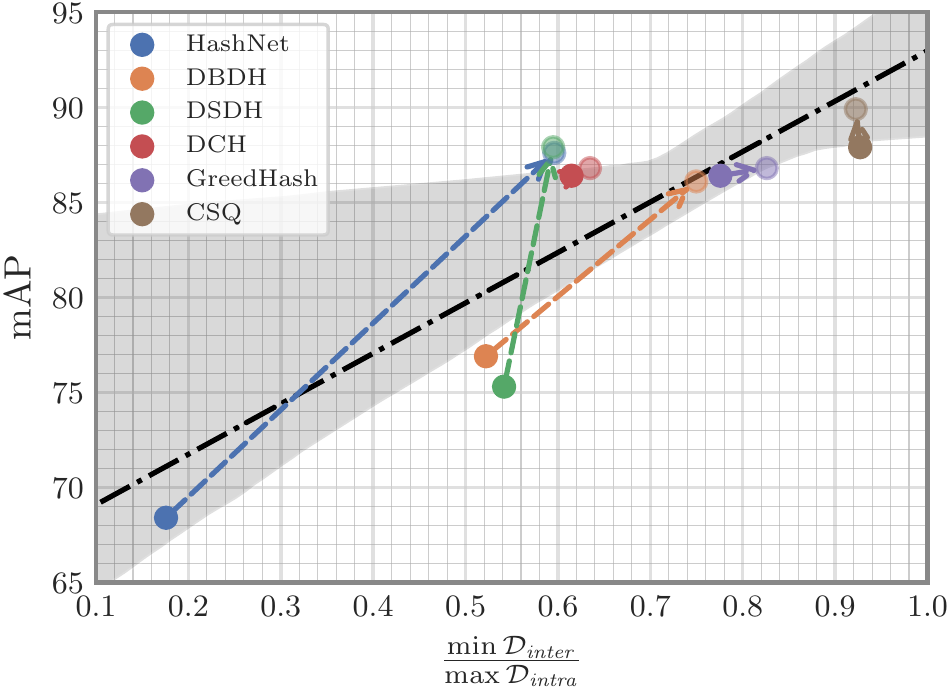}
    \caption{mAP \wrt $\frac{\min{\mathcal{D}_{\mathit{inter}}}}{\max{\mathcal{D}_{\mathit{intra}}}}$ for different methods. By integrating our auxiliary objective, we could observe significant performance and $\frac{\min{\mathcal{D}_{\mathit{inter}}}}{\max{\mathcal{D}_{\mathit{intra}}}}$ increase in most cases (from solid dots to translucent dots). Regression line is also plotted with $95\%$ confidence interval to reveal linear relationship between two metrics. }
    \label{Fig.AllMethodMAPGamma}
\end{wrapfigure}

\begin{table*}[t]
\caption{mAP comparisons on three benchmark datasets for $16, 32, 64$ bits codes. \textcolor{green}{$_{\uparrow\left(\cdot\right)}$} indicates performance enhancement with our method integrated.}
\label{Tab.mAP}
\centering
\resizebox{\linewidth}{!}{
\begin{tabular}{@{}rccccccccc@{}}
\toprule
\multirow{2}{*}{Method}&\multicolumn{3}{c}{\textbf{CIFAR-10}}&\multicolumn{3}{c}{\textbf{NUS-WIDE}}&\multicolumn{3}{c}{\textbf{ImageNet}}\\ \cmidrule(lr){2-4}\cmidrule(lr){5-7}\cmidrule(lr){8-10}
                       &  $16$ bits &  $32$ bits & $64$ bits &  $16$ bits &  $32$ bits & $64$ bits &  $16$ bits &  $32$ bits & $64$ bits \\ \midrule
HashNet                &  $63.0$    &  $81.5$   & $84.7$     &  $77.8$    &  $83.3$    & $85.0$    &  $50.6$    &  $63.1$    & $68.4$     \\
HashNet\rlap{$_D$}     & $\bm{79.1}$\diffH{16.1} & $\bm{82.3}$\diffH{0.8} & $\bm{85.3}$\diffH{0.6} & $\bm{78.6}$\diffH{0.8} & $\bm{84.0}$\diffH{0.7} & $\bm{86.2}$\diffH{1.2} & $\bm{72.6}$\diffH{22.0} & $\bm{84.3}$\diffH{21.2} & $\bm{87.6}$\diffH{19.2} \\ \midrule
DBDH                   &  $83.1$    &  $85.0$    & $85.6$    &  $82.8$    &  $84.6$    & $85.7$    &  $55.7$    &  $63.8$    & $76.9$    \\
DBDH\rlap{$_D$}        & $\bm{84.6}$\diffH{1.5} & $\bm{86.0}$\diffH{1.0} & $\bm{86.5}$\diffH{0.9} & $\bm{83.8}$\diffH{1.0} & $\bm{85.6}$\diffH{1.0} & $\bm{86.5}$\diffH{0.8} & $\bm{82.2}$\diffH{26.5} & $\bm{85.8}$\diffH{22.0} & $\bm{86.1}$\diffH{9.2}  \\ \midrule
DSDH                   &  $75.6$    &  $83.1$   & $84.5$     &  $83.3$    &  $84.5$    & $85.6$    &  $57.2$    &  $72.1$    & $75.3$        \\
DSDH\rlap{$_D$}        & $\bm{84.3}$\diffH{8.7}  & $\bm{84.6}$\diffH{1.5} & $\bm{87.3}$\diffH{2.8} & $\bm{83.6}$\diffH{0.3} & $\bm{85.3}$\diffH{0.8} & $\bm{86.4}$\diffH{0.8} & $\bm{81.7}$\diffH{24.5} & $\bm{87.3}$\diffH{15.2} & $\bm{87.9}$\diffH{12.6} \\ \midrule
DCH                    &  $83.4$    &  $84.4$   & $85.3$     &  $80.7$    &  $81.7$    & $80.9$    & $85.5$     &  $86.2$    & $86.4$      \\
DCH\rlap{$_D$}         & $\bm{83.6}$\diffH{0.2}  & $\bm{84.6}$\diffH{0.2} & $\bm{87.1}$\diffH{1.8} & $\bm{82.9}$\diffH{2.2} & $\bm{84.2}$\diffH{2.5} & $\bm{84.9}$\diffH{4.0} & $\bm{86.1}$\diffH{0.6}  & $\bm{87.5}$\diffH{1.3}  & $\bm{88.1}$\diffH{1.7}  \\ \midrule
GreedHash              &  $83.3$    &  $84.3$   & $86.9$     &  $78.6$    &  $80.3$    & $82.0$    &  $83.1$    &  $85.9$    & $86.4$        \\
GreedHash\rlap{$_D$}   & $\bm{85.2}$\diffH{1.9}  & $\bm{85.5}$\diffH{1.2} & $\bm{87.6}$\diffH{0.7} & $\bm{79.4}$\diffH{0.8} & $\bm{83.1}$\diffH{2.8} & $\bm{85.7}$\diffH{3.7} & $\bm{83.8}$\diffH{0.7}  & $\bm{86.6}$\diffH{0.7}  & $\bm{86.8}$\diffH{0.4}  \\ \midrule
CSQ                    &  $83.2$    &  $83.4$    & $84.7$    &  $82.0$    &  $83.5$    & $84.6$    &  $83.4$    &  $86.9$    & $87.9$     \\
CSQ\rlap{$_D$}         & $\bm{88.7}$\diffH{5.5} & $\bm{89.2}$\diffH{5.8} & $\bm{90.3}$\diffH{5.6} & $\bm{83.3}$\diffH{1.3} & $\bm{85.3}$\diffH{1.8} & $\bm{85.8}$\diffH{1.2} & $\bm{88.5}$\diffH{5.1} & $\bm{89.5}$\diffH{2.6} & $\bm{90.2}$\diffH{2.3}   \\ \bottomrule
\end{tabular}}
\end{table*}

\subsection{Performance Enhancements when Integrating into State-of-the-Art}
To evaluate performance gain when integrating our proposed method into hash-models, we conduct experiments on three datasets. Specifically, we first report the original performance of tested models and then re-run with our method incorporated to make a comparison. Both $16, 32, 64$ bits results are reported to show performance from short codes to long codes.

\begin{wraptable}{R}{0.5\textwidth}
\caption{Recognition performance on ImageNet.}
\label{Tab.cls}
\centering
\begin{tabular}{@{}rccc@{}}
\toprule
\multirow{2}{*}{Method}&\multicolumn{3}{c}{\textbf{ImageNet}}\\
\cmidrule(lr){2-4}
                       &  $16$ bits &  $32$ bits & $64$ bits \\ \midrule
HashNet                &  $69.1$    &  $74.1$   & $82.4$       \\
HashNet\rlap{$_D$}     & $\bm{83.2}$\diffH{14.1} & $\bm{88.1}$\diffH{14.0} & $\bm{89.5}$\diffH{7.1} \\ \midrule
DBDH                   &  $80.9$    &  $82.9$   & $88.2$   \\
DBDH\rlap{$_D$}        & $\bm{86.8}$\diffH{5.9} & $\bm{89.1}$\diffH{6.2} & $\bm{88.4}$\diffH{0.2}  \\ \midrule
DSDH                   &  $40.6$    &  $49.9$   & $54.1$       \\
DSDH\rlap{$_D$}        & $\bm{61.1}$\diffH{20.5} & $\bm{62.1}$\diffH{12.2} & $\bm{63.2}$\diffH{9.1} \\ \midrule
DCH                    &  $88.5$    &  $89.2$   & $88.1$        \\
DCH\rlap{$_D$}         & $\bm{88.9}$\diffH{0.4} & $\bm{89.5}$\diffH{0.3} & $\bm{88.7}$\diffH{0.6}  \\ \midrule
GreedHash              &  $79.1$    &  $86.8$   & $87.0$        \\
GreedHash\rlap{$_D$}   & $\bm{80.2}$\diffH{1.1} & $\bm{87.1}$\diffH{0.3} & $\bm{88.4}$\diffH{1.4}  \\ \midrule
CSQ                    &  $87.7$    &  $89.0$   & $89.4$      \\
CSQ\rlap{$_D$}         & $\bm{87.9}$\diffH{0.2} & $\bm{89.1}$\diffH{0.1} & $\bm{91.0}$\diffH{1.6}   \\
\bottomrule
\end{tabular}
\end{wraptable}

\textbf{Retrieval Performance.} Retrieval performance comparisons under mAP is shown in \cref{Tab.mAP}. With our integrated method, all tested hashing methods have performance enhancement. We observe an up to $26.5\%$ increase and $5.02\%$ on average, which is a significant improvement. Specifically, performance increase on HashNet, DBDH and DSDH is higher than other methods (first $3$ rows), especially on ImageNet dataset. A potential reason is that all of these three methods use pairwise metric learning objectives and inner-product to approximate Hamming distance, which may not handle optimization well when dataset size and class number is large. Meanwhile, our method works on multi-label datasets where all methods also obtain performance gain, indicating that multi-label data may also benefit from our objective. Furthermore, we also observe a $3.46\%$ average mAP increase on CSQ. It adopts a similar objective but a different optimization approach with ours. This increase shows the effectiveness of our proposed posterior estimation approach, which could perform better optimization than theirs.

\textbf{Recognition Performance.} Similarly, we report the recognition performance of these methods by measuring classification accuracy on ImageNet with $16, 32, 64$ bits codes. Results are reported in \cref{Tab.cls}. Generally, we observe similar performance gains. When methods have our integration, their accuracy is significantly increased, up to $20.5\%$ and $5.3\%$ on average. Specifically, some of them utilize an auxiliary classification branch to produce predictions, \ie, GreedyHash and DSDH. While for others, we adopt a $k$NN classifier to classify samples by voting with the base split. We obtain increase for both approaches. This indicates that our approach also helps for hash-based recognition in different scenarios, by leveraging inter-class distinctiveness and intra-class compactness.

From the above comparisons, we confirm the effectiveness of our proposed method. Overall, our method is validated to be flexible and promising to deploy in the above scenarios with various bits.

\begin{figure*}[t]
\captionsetup[subfigure]{justification=centering}
    \centering
    \begin{subfigure}[b]{0.245\textwidth}
     \centering
     \includegraphics[width=\linewidth]{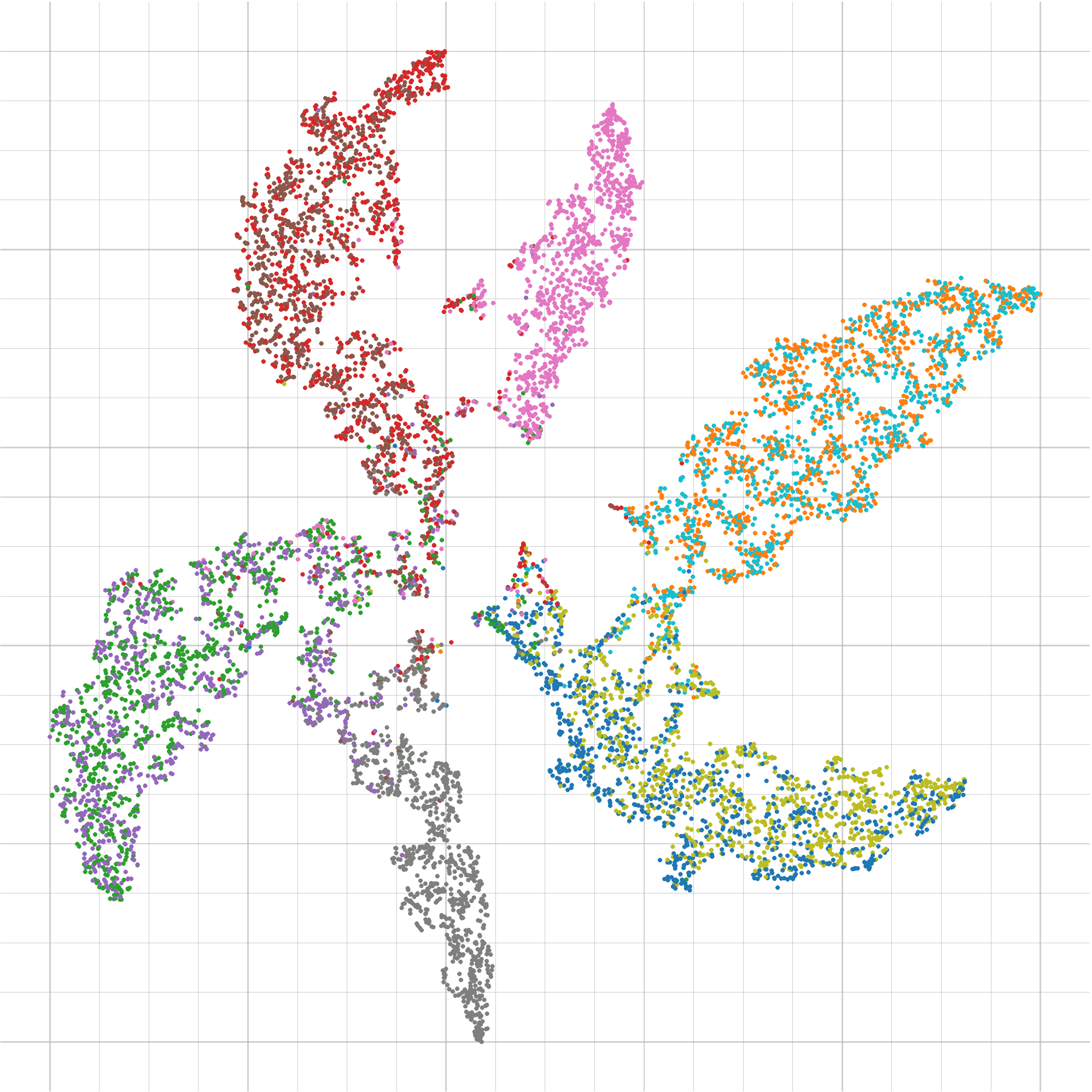}
     \caption{HashNet $64$ bits, \\
     $\frac{\min{\mathcal{D}_{\mathit{inter}}}}{\max{\mathcal{D}_{\mathit{intra}}}} = 0.8562$}
     \label{Fig.TSNE-a}
    \end{subfigure}
    \hfill
    \begin{subfigure}[b]{0.245\textwidth}
     \centering
     \includegraphics[width=\linewidth]{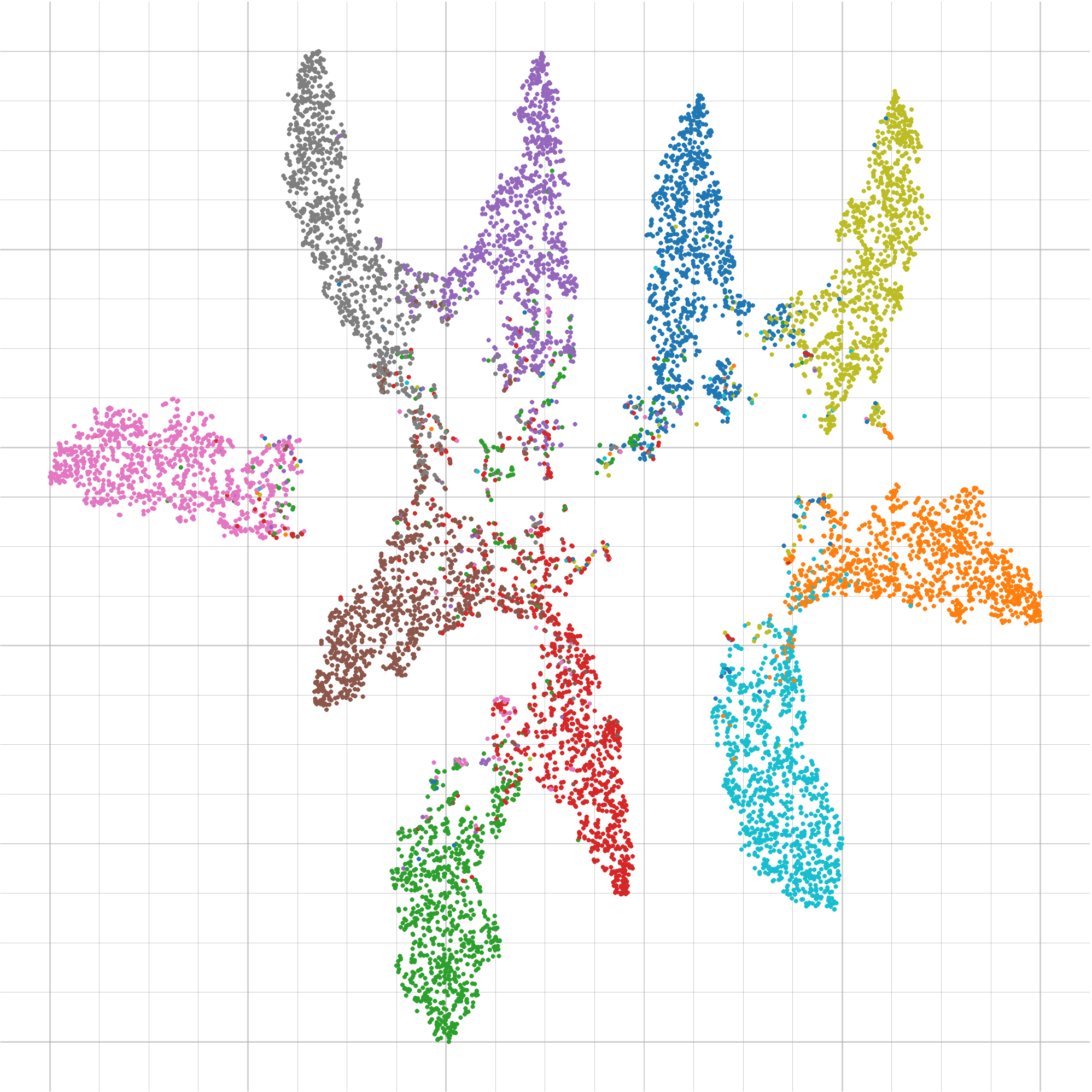}
     \caption{HashNet$_D$ $64$ bits, \\
     $\frac{\min{\mathcal{D}_{\mathit{inter}}}}{\max{\mathcal{D}_{\mathit{intra}}}} = 0.9334$}
     \label{Fig.TSNE-b}
    \end{subfigure}
    \hfill
    \begin{subfigure}[b]{0.245\textwidth}
     \centering
     \includegraphics[width=\linewidth]{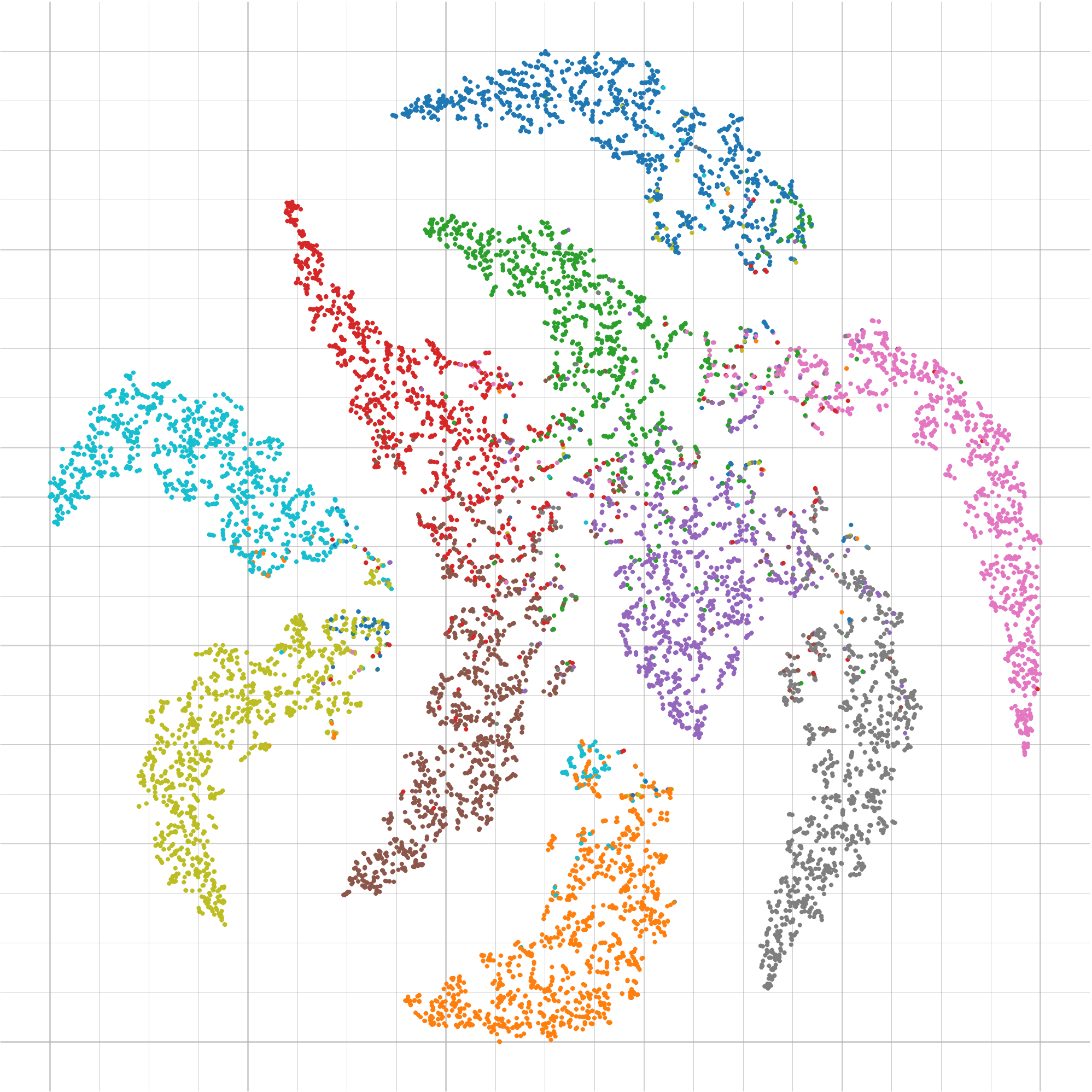}
     \caption{DBDH $64$ bits, \\
     $\frac{\min{\mathcal{D}_{\mathit{inter}}}}{\max{\mathcal{D}_{\mathit{intra}}}} = 0.9447$}
     \label{Fig.TSNE-c}
    \end{subfigure}
    \hfill
    \begin{subfigure}[b]{0.245\textwidth}
     \centering
     \includegraphics[width=\linewidth]{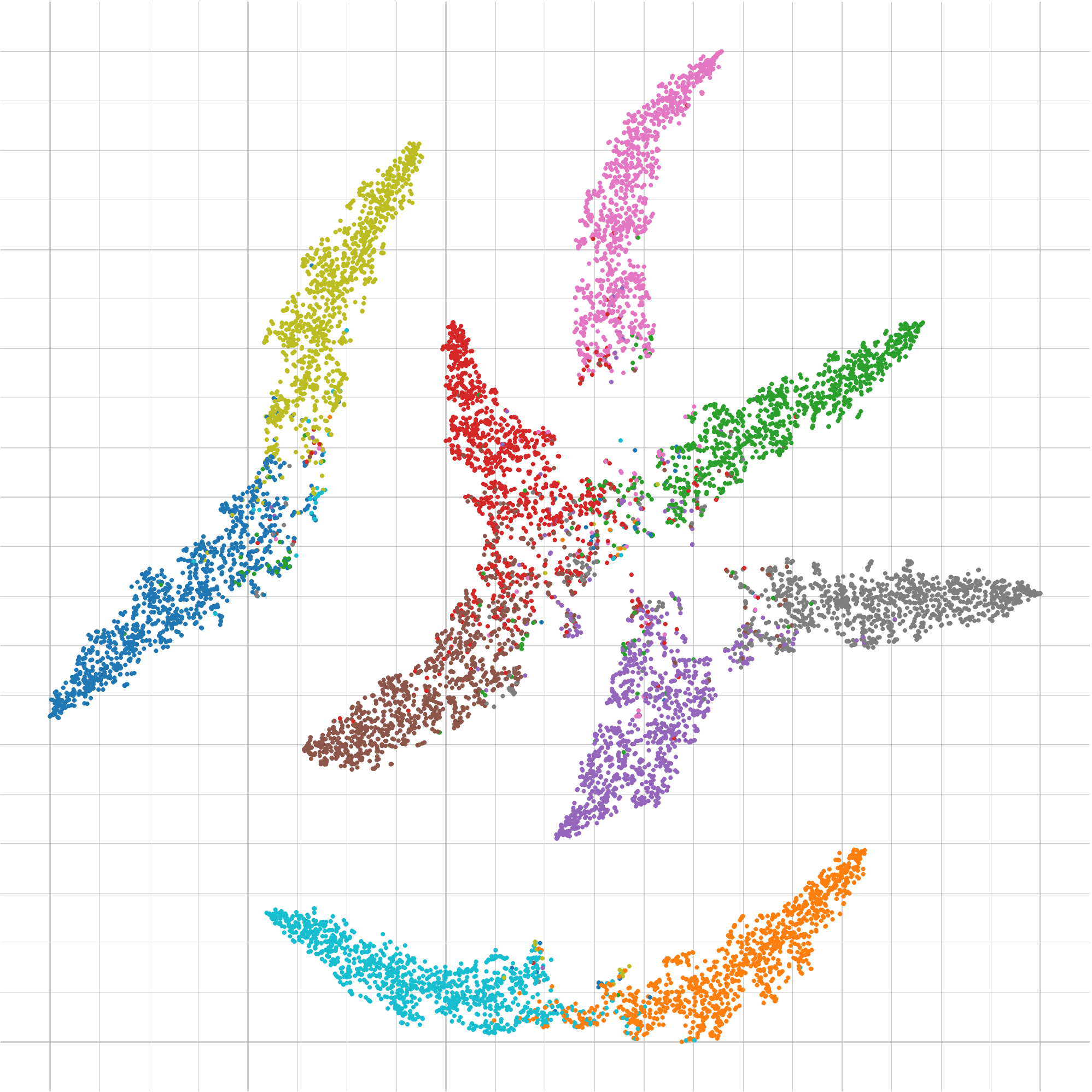}
     \caption{DBDH$_D$ $64$ bits, \\
     $\frac{\min{\mathcal{D}_{\mathit{inter}}}}{\max{\mathcal{D}_{\mathit{intra}}}} = 0.9761$}
     \label{Fig.TSNE-d}
    \end{subfigure}
    \caption{t-SNE visualization for HashNet and DBDH $\mathit{w/}$ and $\mathit{w/o}$ our integration, where the $\left(\cdot\right)_D$ variants have more compact structures and larger margin among different classes than original ones.}
    \label{Fig.TSNE}
\end{figure*}

\subsection{Qualitative Comparisons}
\textbf{t-SNE visualization.} To further analyze how our objective affects hash codes by adjusting $\mathcal{D}_{\mathit{inter}}$ and $\mathcal{D}_{\mathit{intra}}$, we conduct codes visualization by t-SNE~\cite{TSNE} on HashNet and DBDH. Specifically, we randomly sample $5,000$ $64$ bits codes from CIFAR-10 base split. t-SNE is then performed on these codes as shown in \cref{Fig.TSNE}, where points' color indicates class. From these figures, we give our humble explanation. Firstly, please look at \cref{Fig.TSNE-a}, codes extracted from the original HashNet are mixed among different classes. While in \cref{Fig.TSNE-c}, clusters are loose. These representations result in low $\min{\mathcal{D}_{\mathit{inter}}}$ and high $\max{\mathcal{D}_{\mathit{intra}}}$. When they are trained along with our method~(\cref{Fig.TSNE-b,Fig.TSNE-d}), we get more separable and distinct clusters compared to the original ones. Meanwhile, codes in a cluster are more compact than the original methods. Quantitative results of $\frac{\min{\mathcal{D}_{\mathit{inter}}}}{\max{\mathcal{D}_{\mathit{intra}}}}$ placed under figures also reveal this phenomenon.

\textbf{mAP \wrt $\frac{\min{\mathcal{D}_{\mathit{inter}}}}{\max{\mathcal{D}_{\mathit{intra}}}}$ visualization.} To further confirm the relationship between hashing performance and $\frac{\min{\mathcal{D}_{\mathit{inter}}}}{\max{\mathcal{D}_{\mathit{intra}}}}$, as in \cref{Sec.Ablation}, we plot mAP \wrt $\frac{\min{\mathcal{D}_{\mathit{inter}}}}{\max{\mathcal{D}_{\mathit{intra}}}}$ for all tested methods on $64$ bits ImageNet base split (\cref{Fig.AllMethodMAPGamma}). As the figure shows, most of the methods obtain higher mAP with larger $\frac{\min{\mathcal{D}_{\mathit{inter}}}}{\max{\mathcal{D}_{\mathit{intra}}}}$ and move from lower-left region to upper-right region (solid dots to translucent dots indicate original methods to integrated methods). Meanwhile, according to the positions of these dots, mAP and $\frac{\min{\mathcal{D}_{\mathit{inter}}}}{\max{\mathcal{D}_{\mathit{intra}}}}$ are supposed to be under linear relationship for different methods, \ie higher $\frac{\min{\mathcal{D}_{\mathit{inter}}}}{\max{\mathcal{D}_{\mathit{intra}}}}$ leads to higher mAP. Therefore, we give a regression line with $95\%$ confidence interval on the plot to confirm our observations. Extended from this, our proposed lower bound could also be a criterion of hash codes' performance.

\section{Conclusion}
\label{Sec.Conclusion}
In this paper, we conduct a comprehensive study on the characteristics of hash codes. As a result, we prove that hash codes' performance is lower-bounded by inter-class distinctiveness and intra-class compactness. Formulating such a lower bound as an objective, we could further lift it in hash-model training. Meanwhile, our proposed surrogate model for posterior estimation over hash codes' fully exploits the above guidance to perform low-bias model optimization and finally produce good codes. Extensive experiments conducted on three benchmark datasets confirm the effectiveness of our proposed method. We are able to boost current hash-models' performance with a flexible integration.

\section*{Limitation and Broader Impacts}
Hashing and related compact representation learning are able to significantly reduce computational requirements while improving memory efficiency when deploying to real scenarios. However, the main challenge for these techniques is information loss which results in a performance drop. To tackle this, our work on on hash codes' performance gives a lower bound, validated in two specific tasks. However, generalizing such lower bounds to various scenarios, \eg semi-supervised, unsupervised hash learning, are left for future study. Meanwhile, our posterior estimation approach is not verified for extremely long bits. Nevertheless, our work may still provide the potential to inspire researchers to improve not only hash learning, but also broader areas that adopt metric learning. Our study on posterior estimation may also help for precise discrete optimizations.

\begin{ack}
This work is supported by National Key Research and Development Program of China (No. 2018AAA0102200), the National Natural Science Foundation of China (Grant No. 62122018, No. 61772116, No. 62020106008, No. 61872064).
\end{ack}

\small{
\bibliographystyle{abbrvnat}
\bibliography{biblo}

\begin{thebibliography}{64}
\providecommand{\natexlab}[1]{#1}
\providecommand{\url}[1]{\texttt{#1}}
\expandafter\ifx\csname urlstyle\endcsname\relax
  \providecommand{\doi}[1]{doi: #1}\else
  \providecommand{\doi}{doi: \begingroup \urlstyle{rm}\Url}\fi

\bibitem[Andoni et~al.(2015)Andoni, Indyk, Laarhoven, Razenshteyn, and
  Schmidt]{E2LSH}
A.~Andoni, P.~Indyk, T.~Laarhoven, I.~P. Razenshteyn, and L.~Schmidt.
\newblock Practical and optimal {LSH} for angular distance.
\newblock In \emph{NeurIPS}, pages 1225--1233, 2015.

\bibitem[Arandjelovic et~al.(2016)Arandjelovic, Gronat, Torii, Pajdla, and
  Sivic]{NetVLAD}
R.~Arandjelovic, P.~Gronat, A.~Torii, T.~Pajdla, and J.~Sivic.
\newblock Netvlad: Cnn architecture for weakly supervised place recognition.
\newblock In \emph{CVPR}, pages 5297--5307, 2016.

\bibitem[Bengio et~al.(2013)Bengio, L{\'e}onard, and Courville]{STE}
Y.~Bengio, N.~L{\'e}onard, and A.~Courville.
\newblock Estimating or propagating gradients through stochastic neurons for
  conditional computation.
\newblock \emph{arXiv preprint arXiv:1308.3432}, 2013.

\bibitem[Cameron et~al.(1976)Cameron, Thas, and Payne]{HammingBound3}
P.~J. Cameron, J.~A. Thas, and S.~E. Payne.
\newblock Polarities of generalized hexagons and perfect codes.
\newblock \emph{Geometriae Dedicata}, 5\penalty0 (4):\penalty0 525--528, 1976.

\bibitem[Cao et~al.(2018)Cao, Long, Liu, and Wang]{DCH}
Y.~Cao, M.~Long, B.~Liu, and J.~Wang.
\newblock Deep cauchy hashing for hamming space retrieval.
\newblock In \emph{CVPR}, pages 1229--1237, 2018.

\bibitem[Cao et~al.(2017)Cao, Long, Wang, and Yu]{HashNet}
Z.~Cao, M.~Long, J.~Wang, and P.~S. Yu.
\newblock Hashnet: Deep learning to hash by continuation.
\newblock In \emph{ICCV}, pages 5609--5618, 2017.

\bibitem[Chen et~al.(2019)Chen, Lai, Ding, Lin, and Wong]{AnchorHash}
Y.~Chen, Z.~Lai, Y.~Ding, K.~Lin, and W.~K. Wong.
\newblock Deep supervised hashing with anchor graph.
\newblock In \emph{ICCV}, pages 9795--9803, 2019.

\bibitem[Chua et~al.(2009)Chua, Tang, Hong, Li, Luo, and Zheng]{NUSWIDE}
T.~Chua, J.~Tang, R.~Hong, H.~Li, Z.~Luo, and Y.~Zheng.
\newblock {NUS-WIDE:} a real-world web image database from national university
  of singapore.
\newblock In \emph{CIVR}, 2009.

\bibitem[Dahlgaard et~al.(2017)Dahlgaard, Knudsen, and Thorup]{application1}
S.~Dahlgaard, M.~B.~T. Knudsen, and M.~Thorup.
\newblock Practical hash functions for similarity estimation and dimensionality
  reduction.
\newblock In \emph{NeurIPS}, pages 6615--6625, 2017.

\bibitem[Dai(2012)]{MVB3}
B.~Dai.
\newblock Multivariate bernoulli distribution models.
\newblock Technical report, Citeseer, 2012.

\bibitem[Dai et~al.(2013)Dai, Ding, and Wahba]{MVB1}
B.~Dai, S.~Ding, and G.~Wahba.
\newblock Multivariate bernoulli distribution.
\newblock \emph{Bernoulli}, 19\penalty0 (4):\penalty0 1465--1483, 2013.

\bibitem[Dai et~al.(2017)Dai, Guo, Kumar, He, and Song]{StochasticHash}
B.~Dai, R.~Guo, S.~Kumar, N.~He, and L.~Song.
\newblock Stochastic generative hashing.
\newblock In \emph{ICML}, pages 913--922, 2017.

\bibitem[Deng et~al.(2018)Deng, Chen, Liu, Gao, and Tao]{triplet2}
C.~Deng, Z.~Chen, X.~Liu, X.~Gao, and D.~Tao.
\newblock Triplet-based deep hashing network for cross-modal retrieval.
\newblock \emph{{IEEE} Trans. Image Process.}, 27\penalty0 (8):\penalty0
  3893--3903, 2018.

\bibitem[Deng et~al.(2009)Deng, Dong, Socher, Li, Li, and Fei{-}Fei]{ImageNet}
J.~Deng, W.~Dong, R.~Socher, L.~Li, K.~Li, and L.~Fei{-}Fei.
\newblock Imagenet: {A} large-scale hierarchical image database.
\newblock In \emph{CVPR}, pages 248--255, 2009.

\bibitem[Doan et~al.(2022)Doan, Yang, and Li]{OneLoss2}
K.~D. Doan, P.~Yang, and P.~Li.
\newblock One loss for quantization: Deep hashing with discrete wasserstein
  distributional matching.
\newblock In \emph{CVPR}, pages 9447--9457, 2022.

\bibitem[Dong et~al.(2020)Dong, Mnih, and Tucker]{GradientEst}
Z.~Dong, A.~Mnih, and G.~Tucker.
\newblock Disarm: An antithetic gradient estimator for binary latent variables.
\newblock In \emph{NeurIPS}, 2020.

\bibitem[Flexa et~al.(2021)Flexa, Gomes, Moreira, Alves, and Sales]{TSNE}
C.~Flexa, W.~C. Gomes, I.~Moreira, R.~Alves, and C.~Sales.
\newblock Polygonal coordinate system: Visualizing high-dimensional data using
  geometric dr, and a deterministic version of t-sne.
\newblock \emph{Expert Syst. Appl.}, 175:\penalty0 114741, 2021.

\bibitem[Gao et~al.(2019)Gao, Zhu, Song, Zhao, and Shen]{DPgQ}
L.~Gao, X.~Zhu, J.~Song, Z.~Zhao, and H.~T. Shen.
\newblock Beyond product quantization: Deep progressive quantization for image
  retrieval.
\newblock In \emph{IJCAI}, pages 723--729, 2019.

\bibitem[Hamming(1986)]{aqnd-problem1}
R.~W. Hamming.
\newblock \emph{Coding and information theory}.
\newblock Prentice-Hall, Inc., 1986.

\bibitem[He et~al.(2016)He, Zhang, Ren, and Sun]{ResNet}
K.~He, X.~Zhang, S.~Ren, and J.~Sun.
\newblock Deep residual learning for image recognition.
\newblock In \emph{CVPR}, pages 770--778, 2016.

\bibitem[Hoe et~al.(2021)Hoe, Ng, Zhang, Chan, Song, and Xiang]{OneLoss1}
J.~T. Hoe, K.~W. Ng, T.~Zhang, C.~S. Chan, Y.~Song, and T.~Xiang.
\newblock One loss for all: Deep hashing with a single cosine similarity based
  learning objective.
\newblock In \emph{NeurIPS}, pages 24286--24298, 2021.

\bibitem[J{\'e}gou et~al.(2010)J{\'e}gou, Douze, and Schmid]{PQ}
H.~J{\'e}gou, M.~Douze, and C.~Schmid.
\newblock Product quantization for nearest neighbor search.
\newblock \emph{IEEE Trans. Pattern Anal. Mach. Intell.}, 33\penalty0
  (1):\penalty0 117--128, 2010.

\bibitem[Kong and Li(2012)]{isoHahs}
W.~Kong and W.~Li.
\newblock Isotropic hashing.
\newblock In \emph{NeurIPS}, pages 1655--1663, 2012.

\bibitem[Krizhevsky et~al.(2009)Krizhevsky, Hinton, et~al.]{CIFAR}
A.~Krizhevsky, G.~Hinton, et~al.
\newblock Learning multiple layers of features from tiny images.
\newblock 2009.

\bibitem[Kulis and Darrell(2009)]{Hash3}
B.~Kulis and T.~Darrell.
\newblock Learning to hash with binary reconstructive embeddings.
\newblock In \emph{NeurIPS}, pages 1042--1050, 2009.

\bibitem[Kulis et~al.(2009)Kulis, Jain, and Grauman]{CompactLearning1}
B.~Kulis, P.~Jain, and K.~Grauman.
\newblock Fast similarity search for learned metrics.
\newblock \emph{{IEEE} Trans. Pattern Anal. Mach. Intell.}, 31\penalty0
  (12):\penalty0 2143--2157, 2009.

\bibitem[Kullback and Leibler(1951)]{KLDiv}
S.~Kullback and R.~A. Leibler.
\newblock {On Information and Sufficiency}.
\newblock \emph{The Annals of Mathematical Statistics}, 22\penalty0
  (1):\penalty0 79 -- 86, 1951.

\bibitem[Lai et~al.(2015)Lai, Pan, Liu, and Yan]{Hash4}
H.~Lai, Y.~Pan, Y.~Liu, and S.~Yan.
\newblock Simultaneous feature learning and hash coding with deep neural
  networks.
\newblock In \emph{CVPR}, pages 3270--3278, 2015.

\bibitem[Li et~al.(2018{\natexlab{a}})Li, Deng, Li, Liu, Gao, and Tao]{Hash2}
C.~Li, C.~Deng, N.~Li, W.~Liu, X.~Gao, and D.~Tao.
\newblock Self-supervised adversarial hashing networks for cross-modal
  retrieval.
\newblock In \emph{CVPR}, pages 4242--4251, 2018{\natexlab{a}}.

\bibitem[Li et~al.(2017)Li, Sun, He, and Tan]{DSDH}
Q.~Li, Z.~Sun, R.~He, and T.~Tan.
\newblock Deep supervised discrete hashing.
\newblock In \emph{NeurIPS}, pages 2482--2491, 2017.

\bibitem[Li et~al.(2018{\natexlab{b}})Li, Liu, and Huang]{Hash1}
Y.~Li, W.~Liu, and J.~Huang.
\newblock Sub-selective quantization for learning binary codes in large-scale
  image search.
\newblock \emph{{IEEE} Trans. Pattern Anal. Mach. Intell.}, 40\penalty0
  (6):\penalty0 1526--1532, 2018{\natexlab{b}}.

\bibitem[Liang et~al.(2022)Liang, Pan, Lai, Liu, and Yin]{triplet1}
Y.~Liang, Y.~Pan, H.~Lai, W.~Liu, and J.~Yin.
\newblock Deep listwise triplet hashing for fine-grained image retrieval.
\newblock \emph{{IEEE} Trans. Image Process.}, 31:\penalty0 949--961, 2022.

\bibitem[Liu et~al.(2014)Liu, Mu, Kumar, and Chang]{DiscreteGraphHash}
W.~Liu, C.~Mu, S.~Kumar, and S.~Chang.
\newblock Discrete graph hashing.
\newblock In \emph{NeurIPS}, pages 3419--3427, 2014.

\bibitem[Luo et~al.(2022)Luo, Wang, Wu, Chen, Deng, Huang, and
  Hua]{HashSurvey3}
X.~Luo, H.~Wang, D.~Wu, C.~Chen, M.~Deng, J.~Huang, and X.-S. Hua.
\newblock A survey on deep hashing methods.
\newblock \emph{ACM Trans. Knowl. Discov. Data}, 2022.

\bibitem[MacWilliams and Sloane(1977)]{aqnd-problem2}
F.~J. MacWilliams and N.~J.~A. Sloane.
\newblock \emph{The theory of error correcting codes}, volume~16.
\newblock Elsevier, 1977.

\bibitem[Naseri and Timarchi(2018)]{XOR}
H.~Naseri and S.~Timarchi.
\newblock Low-power and fast full adder by exploring new {XOR} and {XNOR}
  gates.
\newblock \emph{{IEEE} Trans. Very Large Scale Integr. Syst.}, 26\penalty0
  (8):\penalty0 1481--1493, 2018.

\bibitem[Norouzi et~al.(2012)Norouzi, Fleet, and Salakhutdinov]{HammingMetric}
M.~Norouzi, D.~J. Fleet, and R.~Salakhutdinov.
\newblock Hamming distance metric learning.
\newblock In \emph{NeurIPS}, pages 1070--1078, 2012.

\bibitem[Norouzi et~al.(2014)Norouzi, Punjani, and Fleet]{MultiIdxHash}
M.~Norouzi, A.~Punjani, and D.~J. Fleet.
\newblock Fast exact search in hamming space with multi-index hashing.
\newblock \emph{{IEEE} Trans. Pattern Anal. Mach. Intell.}, 36\penalty0
  (6):\penalty0 1107--1119, 2014.

\bibitem[Paszke et~al.(2019)Paszke, Gross, Massa, Lerer, Bradbury, Chanan,
  Killeen, Lin, Gimelshein, Antiga, Desmaison, Kopf, Yang, DeVito, Raison,
  Tejani, Chilamkurthy, Steiner, Fang, Bai, and Chintala]{PyTorch}
A.~Paszke, S.~Gross, F.~Massa, A.~Lerer, J.~Bradbury, G.~Chanan, T.~Killeen,
  Z.~Lin, N.~Gimelshein, L.~Antiga, A.~Desmaison, A.~Kopf, E.~Yang, Z.~DeVito,
  M.~Raison, A.~Tejani, S.~Chilamkurthy, B.~Steiner, L.~Fang, J.~Bai, and
  S.~Chintala.
\newblock Pytorch: An imperative style, high-performance deep learning library.
\newblock In \emph{NeurIPS}, volume~32, pages 8026--8037, 2019.

\bibitem[Reddi et~al.(2018)Reddi, Kale, and Kumar]{Adam}
S.~J. Reddi, S.~Kale, and S.~Kumar.
\newblock On the convergence of adam and beyond.
\newblock In \emph{ICLR}, 2018.

\bibitem[Shen et~al.(2015)Shen, Shen, Liu, and Shen]{SDH}
F.~Shen, C.~Shen, W.~Liu, and H.~T. Shen.
\newblock Supervised discrete hashing.
\newblock In \emph{CVPR}, pages 37--45, 2015.

\bibitem[Song et~al.(2019)Song, Zhu, Gao, Xu, Liu, and Shen]{DRQ}
J.~Song, X.~Zhu, L.~Gao, X.-S. Xu, W.~Liu, and H.~T. Shen.
\newblock Deep recurrent quantization for generating sequential binary codes.
\newblock In \emph{IJCAI}, pages 912--918, 2019.

\bibitem[Song et~al.(2020{\natexlab{a}})Song, He, Gao, Xu, Hanjalic, and
  Shen]{ubgan}
J.~Song, T.~He, L.~Gao, X.~Xu, A.~Hanjalic, and H.~T. Shen.
\newblock Unified binary generative adversarial network for image retrieval and
  compression.
\newblock \emph{Int. J. Comput. Vis.}, 128\penalty0 (8):\penalty0 2243--2264,
  2020{\natexlab{a}}.

\bibitem[Song et~al.(2020{\natexlab{b}})Song, Lang, Zhu, Xu, Gao, and
  Shen]{3DUVQ}
J.~Song, R.~Lang, X.~Zhu, X.~Xu, L.~Gao, and H.~T. Shen.
\newblock 3d self-attention for unsupervised video quantization.
\newblock In \emph{ACM SIGIR}, pages 1061--1070, 2020{\natexlab{b}}.

\bibitem[Su et~al.(2018)Su, Zhang, Han, and Tian]{GreedHash}
S.~Su, C.~Zhang, K.~Han, and Y.~Tian.
\newblock Greedy hash: Towards fast optimization for accurate hash coding in
  {CNN}.
\newblock In \emph{NeurIPS}, pages 806--815, 2018.

\bibitem[Svenstrup et~al.(2017)Svenstrup, Hansen, and Winther]{application2}
D.~Svenstrup, J.~M. Hansen, and O.~Winther.
\newblock Hash embeddings for efficient word representations.
\newblock In \emph{NeurIPS}, pages 4928--4936, 2017.

\bibitem[Teugels(1990)]{MVB2}
J.~L. Teugels.
\newblock Some representations of the multivariate bernoulli and binomial
  distributions.
\newblock \emph{Journal of multivariate analysis}, 32\penalty0 (2):\penalty0
  256--268, 1990.

\bibitem[Tiet{\"a}v{\"a}inen(1973)]{HammingBound1}
A.~Tiet{\"a}v{\"a}inen.
\newblock On the nonexistence of perfect codes over finite fields.
\newblock \emph{SIAM Journal on Applied Mathematics}, 24\penalty0 (1):\penalty0
  88--96, 1973.

\bibitem[van Lint(2013)]{BlockCode}
J.~van Lint.
\newblock \emph{Introduction to Coding Theory}.
\newblock Graduate Texts in Mathematics. Springer Berlin Heidelberg, 2013.
\newblock ISBN 9783662079980.

\bibitem[Van~Lint(1975)]{HammingBound2}
J.~H. Van~Lint.
\newblock A survey of perfect codes.
\newblock \emph{The Rocky Mountain Journal of Mathematics}, 5\penalty0
  (2):\penalty0 199--224, 1975.

\bibitem[Wang et~al.(2013)Wang, Liu, Sun, and Jiang]{listhash}
J.~Wang, W.~Liu, A.~X. Sun, and Y.~Jiang.
\newblock Learning hash codes with listwise supervision.
\newblock In \emph{ICCV}, pages 3032--3039, 2013.

\bibitem[Wang et~al.(2014)Wang, Shen, Song, and Ji]{HashSurvey2}
J.~Wang, H.~T. Shen, J.~Song, and J.~Ji.
\newblock Hashing for similarity search: A survey.
\newblock \emph{arXiv preprint arXiv:1408.2927}, 2014.

\bibitem[Wang et~al.(2018)Wang, Zhang, Song, Sebe, and Shen]{HashSurvey1}
J.~Wang, T.~Zhang, J.~Song, N.~Sebe, and H.~T. Shen.
\newblock A survey on learning to hash.
\newblock \emph{{IEEE} Trans. Pattern Anal. Mach. Intell.}, 40\penalty0
  (4):\penalty0 769--790, 2018.

\bibitem[Weiss et~al.(2008)Weiss, Torralba, and Fergus]{SpectralHash}
Y.~Weiss, A.~Torralba, and R.~Fergus.
\newblock Spectral hashing.
\newblock In \emph{NeurIPS}, pages 1753--1760, 2008.

\bibitem[Weyand et~al.(2020)Weyand, Araujo, Cao, and Sim]{GLDv2}
T.~Weyand, A.~Araujo, B.~Cao, and J.~Sim.
\newblock Google landmarks dataset v2 - {A} large-scale benchmark for
  instance-level recognition and retrieval.
\newblock In \emph{CVPR}, pages 2572--2581, 2020.

\bibitem[Wu et~al.(2021)Wu, Li, Luo, and Nejdl]{application3}
W.~Wu, B.~Li, C.~Luo, and W.~Nejdl.
\newblock Hashing-accelerated graph neural networks for link prediction.
\newblock In \emph{WWW}, pages 2910--2920, 2021.

\bibitem[Yang et~al.(2018)Yang, Lin, and Chen]{Pointwise1}
H.~Yang, K.~Lin, and C.~Chen.
\newblock Supervised learning of semantics-preserving hash via deep
  convolutional neural networks.
\newblock \emph{{IEEE} Trans. Pattern Anal. Mach. Intell.}, 40\penalty0
  (2):\penalty0 437--451, 2018.

\bibitem[Yoshitaka and Ichikawa(1999)]{Survey1}
A.~Yoshitaka and T.~Ichikawa.
\newblock A survey on content-based retrieval for multimedia databases.
\newblock \emph{{IEEE} Trans. Knowl. Data Eng.}, 11\penalty0 (1):\penalty0
  81--93, 1999.

\bibitem[Yuan et~al.(2020)Yuan, Wang, Zhang, Tay, Jie, Liu, and Feng]{CSQ}
L.~Yuan, T.~Wang, X.~Zhang, F.~E.~H. Tay, Z.~Jie, W.~Liu, and J.~Feng.
\newblock Central similarity quantization for efficient image and video
  retrieval.
\newblock In \emph{CVPR}, pages 3080--3089, 2020.

\bibitem[Zeng et~al.(2021)Zeng, Gao, Lyu, Jing, and Song]{zeng2021conceptual}
P.~Zeng, L.~Gao, X.~Lyu, S.~Jing, and J.~Song.
\newblock Conceptual and syntactical cross-modal alignment with cross-level
  consistency for image-text matching.
\newblock In \emph{Proceedings of the 29th ACM International Conference on
  Multimedia}, pages 2205--2213, 2021.

\bibitem[Zeng et~al.(2022)Zeng, Zhang, Gao, Song, and Shen]{zeng2022video}
P.~Zeng, H.~Zhang, L.~Gao, J.~Song, and H.~T. Shen.
\newblock Video question answering with prior knowledge and object-sensitive
  learning.
\newblock \emph{IEEE Transactions on Image Processing}, 2022.

\bibitem[Zhao et~al.(2015)Zhao, Huang, Wang, and Tan]{listwise1}
F.~Zhao, Y.~Huang, L.~Wang, and T.~Tan.
\newblock Deep semantic ranking based hashing for multi-label image retrieval.
\newblock In \emph{CVPR}, pages 1556--1564, 2015.

\bibitem[Zheng et~al.(2020)Zheng, Zhang, and Lu]{DBDH}
X.~Zheng, Y.~Zhang, and X.~Lu.
\newblock Deep balanced discrete hashing for image retrieval.
\newblock \emph{Neurocomputing}, 403:\penalty0 224--236, 2020.

\bibitem[Zhu et~al.(2022)Zhu, Song, Gao, Zheng, and Shen]{McQuic}
X.~Zhu, J.~Song, L.~Gao, F.~Zheng, and H.~T. Shen.
\newblock Unified multivariate gaussian mixture for efficient neural image
  compression.
\newblock In \emph{CVPR}, pages 17612--17621, 2022.

\end{thebibliography}
}

\section*{Checklist}


\begin{enumerate}

\item For all authors...
\begin{enumerate}
  \item Do the main claims made in the abstract and introduction accurately reflect the paper's contributions and scope?
    \answerYes{}
  \item Did you describe the limitations of your work?
    \answerYes{See \cref{Sec.Conclusion}.}
  \item Did you discuss any potential negative societal impacts of your work?
    \answerNo{No explicit negative impacts.}
  \item Have you read the ethics review guidelines and ensured that your paper conforms to them?
    \answerYes{}
\end{enumerate}

\item If you are including theoretical results...
\begin{enumerate}
  \item Did you state the full set of assumptions of all theoretical results?
    \answerYes{See \cref{Sec.DisPar}.}
        \item Did you include complete proofs of all theoretical results?
    \answerYes{See \cref{Sec.DisPar}.}
\end{enumerate}

\item If you ran experiments...
\begin{enumerate}
  \item Did you include the code, data, and instructions needed to reproduce the main experimental results (either in the supplemental material or as a URL)?
    \answerYes{See supplemental material.}
  \item Did you specify all the training details (e.g., data splits, hyperparameters, how they were chosen)?
    \answerYes{See \cref{Sec.Exp}.}
        \item Did you report error bars (e.g., with respect to the random seed after running experiments multiple times)?
    \answerYes{See \cref{Sec.Exp}.}
        \item Did you include the total amount of compute and the type of resources used (e.g., type of GPUs, internal cluster, or cloud provider)?
    \answerYes{See \cref{Sec.Exp}.}
\end{enumerate}

\item If you are using existing assets (e.g., code, data, models) or curating/releasing new assets...
\begin{enumerate}
  \item If your work uses existing assets, did you cite the creators?
    \answerYes{See \cref{Sec.Exp}.}
  \item Did you mention the license of the assets?
    \answerYes{See \cref{Sec.Exp}.}
  \item Did you include any new assets either in the supplemental material or as a URL?
    \answerYes{Codes appear in supplementary materials.}
  \item Did you discuss whether and how consent was obtained from people whose data you're using/curating?
    \answerNA{}
  \item Did you discuss whether the data you are using/curating contains personally identifiable information or offensive content?
    \answerNA{}
\end{enumerate}

\item If you used crowdsourcing or conducted research with human subjects...
\begin{enumerate}
  \item Did you include the full text of instructions given to participants and screenshots, if applicable?
    \answerNA{}
  \item Did you describe any potential participant risks, with links to Institutional Review Board (IRB) approvals, if applicable?
    \answerNA{}
  \item Did you include the estimated hourly wage paid to participants and the total amount spent on participant compensation?
    \answerNA{}
\end{enumerate}

\end{enumerate}


\appendix

\section*{Appendix}
In this supplementary material, we discuss the following topics: Firstly, we give explanation in \cref{Sec.Definition} to demystify concepts of rank lists. Then, proofs of the propositions in main paper are given in \cref{Sec.Proof}. We further discuss why to adopt $\mathit{AP}$ as a criterion of hash codes' performance in \cref{Sec.Criterion}. To train hash-models, we treat the posterior of hash codes to be under the multivariate Bernoulli distribution. We explain why and how to perform posterior estimation in \cref{Sec.MVB,Sec.Est}. Additional experiments are finally given in \cref{Sec.AdditionExp}.

\section{Definitions}
\label{Sec.Definition}
\paragraph{The queries, true positives and false positives.} We demonstrate concepts of queries, true positives and false positives in rank lists in \cref{Fig.Ranklist,Fig.SwapBefore} for easy understanding. As the figure shows, any true positives or false positives are assigned with ranks $i$. Meanwhile, any true positives are also tagged by mis-ranks $m$ introduced in this paper. $m$ indicates how many false positives have the higher ranks than the current true positive.
\begin{figure}[H]
    \centering
    \includegraphics{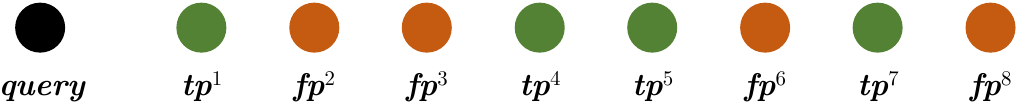}
    \caption{An example rank list for demonstration. True positives are green while false positives are orange. All positives have ranks $i$ placed on upper-right.}
    \label{Fig.Ranklist}
\end{figure}

We assume distances between query and any positive samples are different with each other. Then, we obtain following inequalities according to the property of a rank list:
\begin{equation}
    \label{Eq.DistanceOrder}
    d\left(\bm{q}, \bm{\mathit{tp}}^1\right) < d\left(\bm{q}, \bm{\mathit{fp}}^2\right) < \cdots < d\left(\bm{q}, \bm{\mathit{fp}}^8\right).
\end{equation}

\begin{figure}[H]
    \centering
    \begin{subfigure}[t]{\textwidth}
        \centering
        \includegraphics{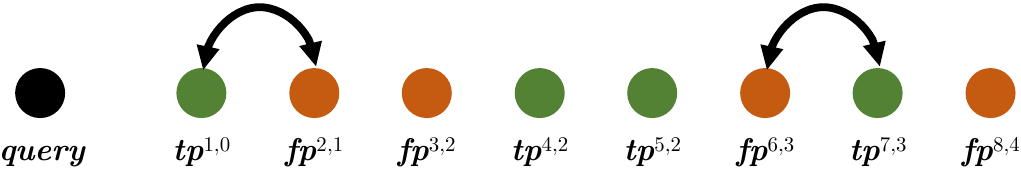}
        \caption{True positives and their mis-ranks (next to ranks of true positives). the operation of swaps could happen between two side-by-side samples due to distances change.}
        \label{Fig.SwapBefore}
    \end{subfigure}
    \hfill
    \begin{subfigure}[t]{\textwidth}
        \centering
        \includegraphics{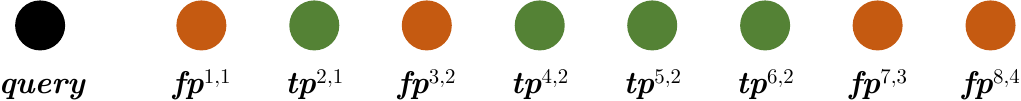}
        \caption{After swapping, the involved true positives will have altered ranks and mis-ranks, while other true positives' will remain unchanged.}
        \label{Fig.SwapAfter}
    \end{subfigure}
    \caption{Mis-ranks marked on true positives and swaps that change ranks and mis-ranks.}
\end{figure}

\paragraph{The side-by-side swaps.} Naturally, if a true positive and a false positive are placed side-by-side, and a swap happens between them due to the distances change, then mis-rank $m$ as well as rank $i$ of the true positive will be altered by $1$ (\cref{Fig.SwapBefore,Fig.SwapAfter}). Meanwhile, mis-ranks of any other true positives will remain unchanged. A special case is a side-by-side swap between two true positives or two false positives, where all ranks and mis-ranks would not change.

\paragraph{Normal swaps.} More generally, any swaps happen in a rank list would influence ranks and mis-ranks of involved positive samples. To determine results after swaps, we could decompose the normal swaps into a series of side-by-side swaps with involved positive samples.

\section{Proof}
\label{Sec.Proof}

\begin{corollary}
    Average precision increases $\mathit{iff} \; m$ decreases.
\end{corollary}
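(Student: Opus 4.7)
The plan is to rewrite $\mathit{AP}$ as an explicit monotone function of the mis-ranks and then read the iff off coordinate-wise. Order the true positives by rank so that $\bm{\mathit{tp}}^{i_1, m_1}, \bm{\mathit{tp}}^{i_2, m_2}, \ldots$ have strictly increasing ranks $i_1 < i_2 < \cdots$. The $j$-th true positive sits above exactly $j-1$ other true positives and $m_j$ false positives, so its rank satisfies $i_j = j + m_j$, and therefore $i_j - m_j = j$. Substituting into the $\mathit{AP}$ expression derived in the main text gives
\begin{equation*}
    \mathit{AP} = \frac{1}{\lvert \bm{\mathit{TP}} \rvert} \sum_{j=1}^{\lvert \bm{\mathit{TP}} \rvert} \frac{j}{j + m_j},
\end{equation*}
which exhibits $\mathit{AP}$ as a strictly decreasing function in each coordinate $m_j$.

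From this closed form the forward direction is immediate: if any $m_j$ strictly decreases while the others are held fixed, the term $j/(j+m_j)$ strictly grows and so does $\mathit{AP}$. For the converse, I would lean on the side-by-side swap framework set up in \cref{Sec.Definition}: any rearrangement of the rank list decomposes into a sequence of side-by-side swaps, and only a TP-FP swap alters any mis-rank. Concretely, a swap that lifts a true positive past an adjacent false positive decreases both its rank and its mis-rank by one, pushing its contribution from $(i-m)/i$ to $(i-m)/(i-1)$, which is a strict increase whenever $i > m$; the mirror FP-TP swap strictly decreases this single contribution, while TP-TP and FP-FP swaps leave every summand unchanged. This yields the equivalence at the atomic swap level, and chaining swaps then transports it to arbitrary monotone rearrangements.

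The main obstacle will be pinning down what \emph{$m$ decreases} means for composite rearrangements, because the multiset of mis-ranks can shift in mixed directions when many swaps are chained together, and a single scalar summary (such as $\sum_j m_j$) does not yield a clean iff by itself. I would resolve this by reading the corollary coordinate-wise through the closed form above: $\mathit{AP}$ strictly increases iff the mis-rank vector $(m_1, \ldots, m_{\lvert \bm{\mathit{TP}} \rvert})$ moves strictly downward in at least one coordinate while the rest stay put, which is precisely what each elementary TP-FP swap achieves. Composing such swaps then lifts the per-swap equivalence to any chain of monotone changes in the rank list, completing the iff.
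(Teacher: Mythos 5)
Your proof is correct, and it differs from the paper's in one useful respect. The paper argues entirely locally: it considers a single side-by-side swap between a true positive and a false positive, recomputes the affected precision term as $\frac{i \pm 1 - (m \pm 1)}{i \pm 1} = \frac{i-m}{i \pm 1}$, observes that all other terms are untouched, and then appeals to the decomposition of general rearrangements into side-by-side swaps. You reproduce that swap analysis for the converse, but your forward direction rests on a device the paper does not use: the identity $i_j = j + m_j$ and the resulting closed form $\mathit{AP} = \frac{1}{\lvert \bm{\mathit{TP}} \rvert}\sum_j \frac{j}{j+m_j}$, which exhibits $\mathit{AP}$ globally as a strictly decreasing function of each mis-rank coordinate. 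This buys you two things the paper's argument lacks: the monotonicity is visible at a glance without tracking how ranks shift under a swap, and it handles an arbitrary change of the mis-rank vector in one step rather than swap by swap. (Minor point: your caveat \enquote{strict increase whenever $i > m$} is vacuous, since $i = j + m_j > m_j$ always.) You are also more candid than the paper about the looseness of the statement itself --- for composite rearrangements with mis-ranks moving in mixed directions, no scalar reading of \enquote{$m$ decreases} gives a clean iff, and both you and the paper ultimately resolve this by reading the claim per elementary TP--FP swap (equivalently, coordinate-wise); your explicit acknowledgement of this is a strength, since the paper's final sentence quietly glosses over it.
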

\begin{proof}
    From the above demonstration, if any swap happens in a rank list between true and false positives, the mis-rank of that true positive is definitely changed and will only result in increase or decrease of $m$ and $i$ by $1$. Correspondingly, precision at rank of involved true positive changes:
    \begin{equation*}
        \frac{i \pm 1 - \left( m \pm 1\right)}{i \pm 1} = \frac{i - m}{i \pm 1},
    \end{equation*}
    which is reversely proportional to $m$. Since precision of other true positives' are unchanged, $\mathit{AP}$ is therefore influenced by the above true positive and reversely proportional to $m$. According to the derivation of normal swaps in \cref{Sec.Definition}, this corollary will cover all cases of $\mathit{AP}$ calculations.
\end{proof}

\begin{proposition}
    \begin{equation*}
        \barabove{m} \propto \frac{\max{d\left(\bm{q}, \bm{\mathit{tp}}\right)}}{\min{d\left(\bm{q}, \bm{\mathit{fp}}\right)}} \; \forall \bm{\mathit{tp}} \in \bm{\mathit{TP}}, \; \bm{\mathit{fp}} \in \bm{\mathit{FP}}
    \end{equation*}
    where $\baraboveshort{\cdot}$ denotes upper bounds. Correspondingly,
    \begin{equation*}
        \barbelow{\mathit{AP}} \propto \frac{\min{d\left(\bm{q}, \bm{\mathit{fp}}\right)}}{\max{d\left(\bm{q}, \bm{\mathit{tp}}\right)}} \; \forall \bm{\mathit{tp}} \in \bm{\mathit{TP}}, \; \bm{\mathit{fp}} \in \bm{\mathit{FP}}
    \end{equation*}
    where $\barbelowshort{\cdot}$ denotes lower bounds.
\end{proposition}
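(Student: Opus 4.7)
My approach is to translate the bound on $\mathit{AP}$ into a bound on the worst-case mis-rank, then bound $\bar{m}$ via an extremal counting argument over where false positives can sit relative to the farthest true positive. Since the preceding corollary has already established that $\mathit{AP}$ moves in the opposite direction of $m$, it suffices to show that $\bar{m}$ is monotonically controlled by $\max d(\bm{q},\bm{\mathit{tp}}) / \min d(\bm{q},\bm{\mathit{fp}})$, and then dualize.

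First I would unfold the defining identity $m = \lvert\{\bm{\mathit{fp}} : d(\bm{q},\bm{\mathit{fp}}) < d(\bm{q},\bm{\mathit{tp}}^{i,m})\}\rvert$. Because $d(\bm{q},\bm{\mathit{tp}}) \le \max d(\bm{q},\bm{\mathit{tp}})$ for every true positive, the uniform upper bound
\[
\bar{m} \;\le\; \bigl\lvert\{\bm{\mathit{fp}} : d(\bm{q},\bm{\mathit{fp}}) < \max d(\bm{q},\bm{\mathit{tp}})\}\bigr\rvert
\]
holds for all $\bm{\mathit{tp}}\in\bm{\mathit{TP}}$. Second, I would observe that the cardinality on the right is itself maximized when the false-positive distances crowd as close to $\bm{q}$ as possible, i.e.\ when $\min d(\bm{q},\bm{\mathit{fp}})$ is small: every additional false positive counted must lie in the distance window $[\min d(\bm{q},\bm{\mathit{fp}}),\, \max d(\bm{q},\bm{\mathit{tp}}))$, and this window widens monotonically with the ratio.

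Third, I would split into two cases to nail down the proportionality. If $\max d(\bm{q},\bm{\mathit{tp}}) \le \min d(\bm{q},\bm{\mathit{fp}})$, the window is empty, every true positive precedes every false positive, so $\bar{m}=0$ and simultaneously the ratio is $\le 1$; this is the best case. Otherwise $\max d(\bm{q},\bm{\mathit{tp}}) > \min d(\bm{q},\bm{\mathit{fp}})$, and as the ratio grows (either by enlarging the true-positive spread or shrinking the false-positive gap) the window strictly expands, which can only weakly increase the admissible count of intruding false positives. This gives the monotone dependence $\bar{m} \propto \max d(\bm{q},\bm{\mathit{tp}}) / \min d(\bm{q},\bm{\mathit{fp}})$. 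Dualizing via the corollary---$\mathit{AP}$ and $m$ move oppositely---then yields $\underbar{$\mathit{AP}$} \propto \min d(\bm{q},\bm{\mathit{fp}}) / \max d(\bm{q},\bm{\mathit{tp}})$.

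\textbf{Main obstacle.} The $\propto$ symbol is being used informally: it encodes a monotonic dependence of the bound on the ratio, not a literal constant of proportionality. The delicate step is justifying that a pointwise upper bound on each individual mis-rank legitimately yields the claimed bound on the aggregate quantity $\mathit{AP} = \frac{1}{\lvert\bm{\mathit{TP}}\rvert}\sum (i-m)/i$; I would handle this by noting that each summand $(i-m)/i$ is monotone in its own $m$, so the worst case over $\bm{\mathit{TP}}$ drives the lower envelope of the average. A secondary subtlety is that strict monotonicity in the second case can fail at integer jumps (the window may widen without admitting a new false positive), so the statement is best read as a non-strict monotone bound rather than a strict proportionality.
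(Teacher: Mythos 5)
Your argument is correct at the same (informal) level of rigor as the paper's, but it proceeds by a genuinely different mechanism. You bound $\barabove{m}$ statically: you note that the worst mis-rank equals (your $\leq$ is in fact an equality, since the farthest true positive realizes $\max d\left(\bm{q},\bm{\mathit{tp}}\right)$) the number of false positives falling in the window $\left[\min d\left(\bm{q},\bm{\mathit{fp}}\right),\,\max d\left(\bm{q},\bm{\mathit{tp}}\right)\right)$, and then argue that this count grows monotonically as the window widens, with the clean degenerate case $\barabove{m}=0$ when $\max d\left(\bm{q},\bm{\mathit{tp}}\right)\leq\min d\left(\bm{q},\bm{\mathit{fp}}\right)$. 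The paper instead argues dynamically: it sandwiches $\max d\left(\bm{q},\bm{\mathit{tp}}\right)$ between the $\barabove{m}$-th and $(\barabove{m}+1)$-th smallest false-positive distances, and then tracks how side-by-side swaps change $\barabove{m}$ when $\max d\left(\bm{q},\bm{\mathit{tp}}\right)$ shrinks and $\min d\left(\bm{q},\bm{\mathit{fp}}\right)$ grows, invoking the distinct-distance assumption and a pigeonhole step to push the $\barabove{m}$-th order statistic up along with the minimum; both directions (``and vice versa'') come out of that swap analysis, which also dovetails with the swap decomposition used to prove the corollary that $\mathit{AP}$ is reversely proportional to $m$. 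Your route is more elementary and, if you add the paper's discreteness/distinctness assumption explicitly, even yields a concrete bound ($\barabove{m}$ is at most the number of distinct integer distances in the window, hence at most $\max d\left(\bm{q},\bm{\mathit{tp}}\right)-\min d\left(\bm{q},\bm{\mathit{fp}}\right)$), whereas the paper's route makes the bidirectional monotone dependence more explicit; your caveats about the informal $\propto$, possible non-strictness at integer jumps, and the aggregation from per-sample mis-ranks to $\mathit{AP}$ are exactly the points the paper handles via its stated assumptions and the swap-based corollary, so no essential step is missing.
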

\begin{proof}
    Considering any true positive, we are able to derive following inequalities from it according to \cref{Eq.DistanceOrder}:
    \begin{equation}
        \label{Eq.UpperBound}
        \min{d\left(\bm{q}, \bm{\mathit{fp}}\right)}_{m} < d\left(\bm{q}, \bm{\mathit{tp}}^{m}\right) < \min{d\left(\bm{q}, \bm{\mathit{fp}}\right)}_{m + 1}
    \end{equation}
    where $\min\left(\cdot\right)_{m}$ is the $m$-th minimum value among the whole set. For example, from \cref{Fig.Ranklist}, we could obtain:
    \begin{equation}
        \label{Eq.LowerBound}
        \min{d\left(\bm{q}, \bm{\mathit{fp}}\right)}_3 <  d\left(\bm{q}, \bm{\mathit{tp}}^{7,3}\right) < \min{d\left(\bm{q}, \bm{\mathit{fp}}\right)}_4
    \end{equation}
    where $\min{d\left(\bm{q}, \bm{\mathit{fp}}\right)}_3 = d\left(\bm{q}, \bm{\mathit{fp}}^6\right)$ and $\min{d\left(\bm{q}, \bm{\mathit{fp}}\right)}_4 = d\left(\bm{q}, \bm{\mathit{fp}}^8\right)$.

    Then, for the last true positive in rank list, which has the largest $m$ among all true positives, \ie $\barabove{m}$, according to above inequalities, we could arrive:
    \begin{equation*}
        \min{d\left(\bm{q}, \bm{\mathit{fp}}\right)}_{\barabove{m}} < \max{d\left(\bm{q}, \bm{\mathit{tp}}\right)} < \min{d\left(\bm{q}, \bm{\mathit{fp}}\right)}_{\barabove{m} + 1}.
    \end{equation*}

    Now, if $\max{d\left(\bm{q}, \bm{\mathit{tp}}\right)}$ becomes smaller and $\min{d\left(\bm{q}, \bm{\mathit{fp}}\right)}_{\barabove{m}}$ becomes larger so that a swap happens between the last true positive and its left most false positive, then we will have:
    \begin{equation*}
    \begin{split}
        \barabove{m}' &= \barabove{m} - 1, \; \mathit{iff} \\
        \max{d\left(\bm{q}, \bm{\mathit{tp}}\right)}' &< \max{d\left(\bm{q}, \bm{\mathit{tp}}\right)},\\
        \min{d\left(\bm{q}, \bm{\mathit{tp}}\right)}_{\barabove{m}}' &> \min{d\left(\bm{q}, \bm{\mathit{tp}}\right)}_{\barabove{m}} \;\mathit{and}\\
        \min{d\left(\bm{q}, \bm{\mathit{tp}}\right)}_{\barabove{m}}' &> \max{d\left(\bm{q}, \bm{\mathit{tp}}\right)}'
    \end{split}
    \end{equation*}
    where $\left(\cdot\right)'$ indicates the new value. Combining our assumption in \cref{Sec.Definition} as well as the pigeonhole principle, increase of $\min{d\left(\bm{q}, \bm{\mathit{tp}}\right)}$  results in increase of $\min{d\left(\bm{q}, \bm{\mathit{tp}}\right)}_{\barabove{m}}$. Therefore:
    \begin{equation*}
        \left.\begin{aligned}
            \max{d\left(\bm{q}, \bm{\mathit{tp}}\right)}\downarrow \\
            \min{d\left(\bm{q}, \bm{\mathit{tp}}\right)}\uparrow \Rightarrow \min{d\left(\bm{q}, \bm{\mathit{tp}}\right)}_{\barabove{m}}\uparrow
        \end{aligned}\right\rbrace \barabove{m}\downarrow,
    \end{equation*}
    and vice versa. This proves \cref{Eq.UpperBound}. \cref{Eq.LowerBound} is immediately obtained since $\mathit{AP}$ is reversely proportional to $\barabove{m}$ according to above corollary.
\end{proof}

\begin{proposition}
\begin{equation}
    \barbelow{\mathit{AP}} \propto \frac{\min{d\left(\bm{q}, \bm{\mathit{fp}}\right)}}{\max{d\left(\bm{q}, \bm{\mathit{tp}}\right)}} \geq \mathit{const} \cdot \frac{\min{\mathcal{D}_{\mathit{inter}}}}{\max{\mathcal{D}_{\mathit{intra}}}}.
\end{equation}
\end{proposition}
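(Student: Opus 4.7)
The first equivalence, $\barbelow{\mathit{AP}} \propto \nicefrac{\min d(\bm{q},\bm{\mathit{fp}})}{\max d(\bm{q},\bm{\mathit{tp}})}$, is already established by the preceding proposition, so I only need to verify the inequality on the right. My strategy is to relate the query/sample distances to the inter- and intra-class center distances via the triangle inequality on the Hamming metric. Concretely, let $c_q$ denote the class of $\bm{q}$ with center $\bm{c}^{c_q}$, let $\bm{\mathit{tp}}$ be any true positive (so $\bm{q},\bm{\mathit{tp}}$ share class $c_q$), and let $\bm{\mathit{fp}}$ be any false positive with class $c_{\mathit{fp}}\neq c_q$ and center $\bm{c}^{c_{\mathit{fp}}}$.

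\textbf{Upper bound on $\max d(\bm{q},\bm{\mathit{tp}})$.} Treating both $\bm{q}$ and $\bm{\mathit{tp}}$ as samples of class $c_q$, the triangle inequality through the common center $\bm{c}^{c_q}$ gives
\begin{equation*}
    d(\bm{q},\bm{\mathit{tp}}) \le d(\bm{q},\bm{c}^{c_q}) + d(\bm{c}^{c_q},\bm{\mathit{tp}}) \le 2\max \mathcal{D}_{\mathit{intra}},
\end{equation*}
since by definition every distance of a same-class sample to its center is bounded by $\max \mathcal{D}_{\mathit{intra}}$. Taking a maximum over true positives preserves the bound.

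\textbf{Lower bound on $\min d(\bm{q},\bm{\mathit{fp}})$.} The reverse triangle inequality applied to the chain $\bm{c}^{c_q}\to \bm{q}\to \bm{\mathit{fp}}\to \bm{c}^{c_{\mathit{fp}}}$ yields
\begin{equation*}
    d(\bm{q},\bm{\mathit{fp}}) \ge d(\bm{c}^{c_q},\bm{c}^{c_{\mathit{fp}}}) - d(\bm{c}^{c_q},\bm{q}) - d(\bm{\mathit{fp}},\bm{c}^{c_{\mathit{fp}}}) \ge \min \mathcal{D}_{\mathit{inter}} - 2\max \mathcal{D}_{\mathit{intra}},
\end{equation*}
using the definition of $\min \mathcal{D}_{\mathit{inter}}$ on the center-to-center term and $\max \mathcal{D}_{\mathit{intra}}$ on the two sample-to-center terms. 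Taking a minimum over false positives preserves the bound.

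\textbf{Combining and identifying the constant.} Dividing the two bounds gives
\begin{equation*}
    \frac{\min d(\bm{q},\bm{\mathit{fp}})}{\max d(\bm{q},\bm{\mathit{tp}})} \;\ge\; \frac{\min \mathcal{D}_{\mathit{inter}} - 2\max \mathcal{D}_{\mathit{intra}}}{2\max \mathcal{D}_{\mathit{intra}}}.
\end{equation*}
In the regime relevant for meaningful hashing, $\min \mathcal{D}_{\mathit{inter}} \gg \max \mathcal{D}_{\mathit{intra}}$, so the subtracted term is lower order and one obtains $\nicefrac{\min d(\bm{q},\bm{\mathit{fp}})}{\max d(\bm{q},\bm{\mathit{tp}})} \gtrsim \tfrac{1}{2}\cdot \nicefrac{\min \mathcal{D}_{\mathit{inter}}}{\max \mathcal{D}_{\mathit{intra}}}$, with $\mathit{const}=\tfrac{1}{2}$ as the natural choice. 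Chaining with the previous proposition then produces the stated inequality for $\barbelow{\mathit{AP}}$.

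\textbf{Main obstacle.} The real subtlety is that the honest triangle-inequality bound is additive, $\min \mathcal{D}_{\mathit{inter}}-2\max \mathcal{D}_{\mathit{intra}}$, rather than the clean multiplicative form $\mathit{const}\cdot \min \mathcal{D}_{\mathit{inter}}$ that the proposition advertises. Hiding this slack inside ``$\mathit{const}$'' requires either an explicit separability assumption ($\min \mathcal{D}_{\mathit{inter}} > 2\max \mathcal{D}_{\mathit{intra}}$, so the numerator is positive and can be rewritten as a fraction of $\min \mathcal{D}_{\mathit{inter}}$) or an asymptotic interpretation of $\propto$. I would state this assumption explicitly so that the inequality is meaningful, and note that the constant degrades continuously as the ratio $\nicefrac{\max \mathcal{D}_{\mathit{intra}}}{\min \mathcal{D}_{\mathit{inter}}}$ grows toward $1/2$.
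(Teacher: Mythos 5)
Your proof is correct and follows essentially the same route as the paper's: bound $\max d\left(\bm{q}, \bm{\mathit{tp}}\right)$ by $2\max{\mathcal{D}_{\mathit{intra}}}$ via the triangle inequality through the shared class center, and bound $\min d\left(\bm{q}, \bm{\mathit{fp}}\right)$ from below using the two class centers. The ``main obstacle'' you flag is real and is precisely where the paper is thinnest: it asserts ``similarly, $\min d\left(\bm{q}, \bm{\mathit{fp}}\right) \geq \mathit{const} \cdot \min{\mathcal{D}_{\mathit{inter}}}$'' without detail, whereas the honest reverse-triangle bound is the additive $\min{\mathcal{D}_{\mathit{inter}}} - 2\max{\mathcal{D}_{\mathit{intra}}}$ you derive, so your explicit separability condition (e.g.\ $\min{\mathcal{D}_{\mathit{inter}}} > 2\max{\mathcal{D}_{\mathit{intra}}}$) is exactly what is needed to recover the multiplicative form and makes explicit a step the paper leaves implicit.
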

\begin{proof}
    Without loss of generality, $d\left(\bm{q}, \bm{\mathit{tp}}\right)$ is covered by distances between any two codes of the same class $c$ (denoted by $\mathcal{D}^c$): $d\left(\bm{q}, \bm{\mathit{tp}}\right) \subseteq \mathcal{D}^c = \left\{d\left(\bm{b}^i, \bm{b}^j\right), \; \mathit{where}\; y^i = y^j = c\right\}$. In contrast, $d\left(\bm{q}, \bm{\mathit{fp}}\right)$ is covered by: $d\left(\bm{q}, \bm{\mathit{fp}}\right) \subseteq \mathcal{D}^{\neq c} = \left\{d\left(\bm{b}^i, \bm{b}^j\right), \; \mathit{where}\; y^i = c,\; y^j \neq c\right\}$. For the first case, we have:
    \begin{equation*}
    \begin{split}
        d\left(\bm{b}^i, \bm{c}^c\right) \leq \max{\mathcal{D}_{\mathit{intra}}}, &\; d\left(\bm{b}^j, \bm{c}^c\right) \leq \max{\mathcal{D}_{\mathit{intra}}},\\
        \mathit{for\; any}\; y^i &= y^j = c
    \end{split}
    \end{equation*}
    where $\bm{c}^c$ is the center of class $c$. Then, according to the triangle inequality:
    \begin{equation*}
        d\left(\bm{b}^i, \bm{b}^j\right) \leq d\left(\bm{b}^i, \bm{c}^c\right) + d\left(\bm{b}^j, \bm{c}^c\right) \leq 2\max{\mathcal{D}_{\mathit{intra}}}.
    \end{equation*}
    Therefore,
    \begin{equation*}
        \max{d\left(\bm{q}, \bm{\mathit{tp}}\right)} \subseteq \mathcal{D}^c \leq 2\max{\mathcal{D}_{\mathit{intra}}} = \mathit{const} \cdot \max{\mathcal{D}_{\mathit{intra}}}.
    \end{equation*}
    Similarly, $\min{d\left(\bm{q}, \bm{\mathit{fp}}\right)} \geq \mathit{const} \cdot \min{\mathcal{D}_{\mathit{inter}}}$ could be derived. Combined with two inequalities, the above proposition is proved.
\end{proof}

\subsection{Analysis on the Proposed Lower Bound}
\subsubsection{Is the Introduced Lower Bound Tight?}
\label{Sec.Tight}
To determine whether the lower bound is tight is a little bit difficult. We firstly introduce some concepts and assumptions to make it easier. Let us start at the example placed in beginning of \cref{Sec.Definition}.

\begin{assumption}
    Any positive samples do not have the same distances to query. This ensures \cref{Eq.DistanceOrder}.
\end{assumption}
\begin{assumption}
    Noticed that we are working in the Hamming space where the Hamming distances between any two samples are discrete and range from $0$ to $h$, \cref{Eq.DistanceOrder} becomes:
    \begin{equation}
        0 \leq d\left(\bm{q}, \bm{\mathit{tp}}^1\right) < d\left(\bm{q}, \bm{\mathit{fp}}^2\right) < \cdots < d\left(\bm{q}, \bm{\mathit{fp}}^8\right) \leq h.
    \end{equation}
\end{assumption}
\begin{assumption}
    The above array is strictly no gaps \ie differences between any side-by-side $d\left(\bm{q}, \cdot\right)$ are $1$.
\end{assumption}

Then, we would derive the closed form lowest $\mathit{AP}$ with $\frac{\min{d\left(\bm{q}, \bm{\mathit{fp}}\right)}}{\max{d\left(\bm{q}, \bm{\mathit{tp}}\right)}}$ under the same order of magnitude. This conclusion could be intuitively drawn from the above example where $\bm{\mathit{tp}}^{7,3}$ and $\bm{\mathit{fp}}^2$ determine $\max{\left(\bm{q}, \bm{\mathit{tp}}\right)}, \min{\left(\bm{q}, \bm{\mathit{fp}}\right)}$. If we keep two values unchanged (in other words, ranks of $\bm{\mathit{tp}}^{7,3}$ and $\bm{\mathit{fp}}^2$ unchanged), then the highest $\mathit{AP}$ will appear as the following figure shows:
\begin{figure}[H]
    \centering
    \includegraphics{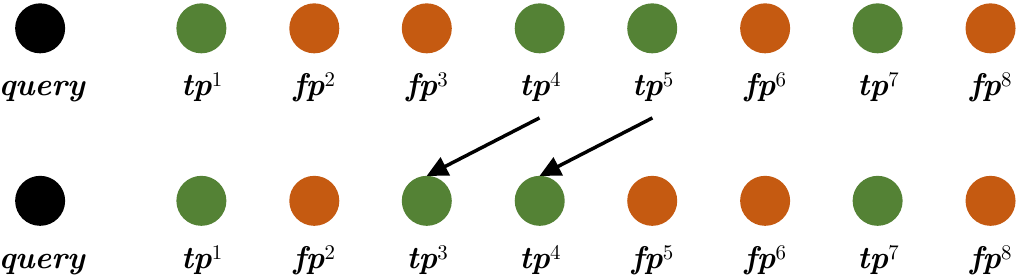}
    \caption{When $\max{\left(\bm{q}, \bm{\mathit{tp}}\right)}, \min{\left(\bm{q}, \bm{\mathit{fp}}\right)}$ are fixed, the highest $\mathit{AP}$ will appear when all true positives have higher ranks than false positives in-between $\max{\left(\bm{q}, \bm{\mathit{tp}}\right)}, \min{\left(\bm{q}, \bm{\mathit{fp}}\right)}$.}
    \label{Fig.HighAP}
\end{figure}
And the lowest $\mathit{AP}$ will appear as the following figure shows:
\begin{figure}[H]
    \centering
    \includegraphics{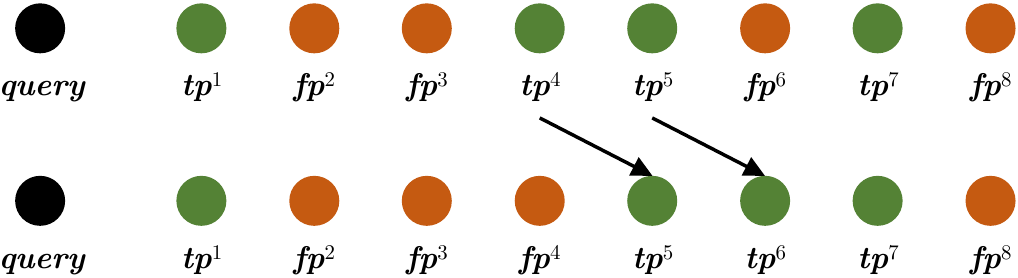}
    \caption{Correspondingly, the lowest $\mathit{AP}$ will appear when all true positives have lower ranks than false positives in-between $\max{\left(\bm{q}, \bm{\mathit{tp}}\right)}, \min{\left(\bm{q}, \bm{\mathit{fp}}\right)}$.}
    \label{Fig.LowAP}
\end{figure}

Based on this, we could easily derive that the lowest AP equals to
\begin{equation*}
\min{d\left(\bm{q}, \bm{\mathit{fp}}\right)} - 1 + \sum_{i=1}^{\lvert \bm{\mathit{TP}} \rvert}{\frac{i}{\max{d\left(\bm{q}, \bm{\mathit{tp}}\right)} - \min{d\left(\bm{q}, \bm{\mathit{fp}}\right)} + i}}
\end{equation*}
which is proportional to $\frac{\min{d\left(\bm{q}, \bm{\mathit{fp}}\right)}}{\max{d\left(\bm{q}, \bm{\mathit{tp}}\right)}}$. Therefore, the lower bound is tight.

We could further extend $\frac{\min{d\left(\bm{q}, \bm{\mathit{fp}}\right)}}{\max{d\left(\bm{q}, \bm{\mathit{tp}}\right)}}$ to $\frac{\min{\mathcal{D}_{\mathit{inter}}}}{\max{\mathcal{D}_{\mathit{intra}}}}$ for the tight lower bound. From \cref{Eq.Hero}, we find that $\frac{\min{d\left(\mathbf{q}, \mathbf{fp}\right)}}{\max{d\left(\mathbf{q}, \mathbf{tp}\right)}} \geq \frac{\min{\mathcal{D}_\mathit{inter}}}{\max{\mathcal{D}_\mathit{intra}}}$. The equality is achieved when query's code $\mathbf{q}$ is exactly the same as its class-center $\mathbf{c}$'s code. In this condition, the tight lower bound is derived by $\frac{\min{\mathcal{D}_\mathit{inter}}}{\max{\mathcal{D}_\mathit{intra}}}$.

Then, could it be applied to general cases? We give our humble discussion for a simple study. There may have untouched complicated cases that will be leaved for future study.

\paragraph{Case 1: Duplicated positives.} If some samples are hashed to the same binary code (collision), then distances from query to them are all equal. They will appear at the same position of the rank list. If they are all true positives or false positives, then we could directly treat them as a single duplicated sample and follow the above rules. For example:
\begin{figure}[H]
    \centering
    \includegraphics{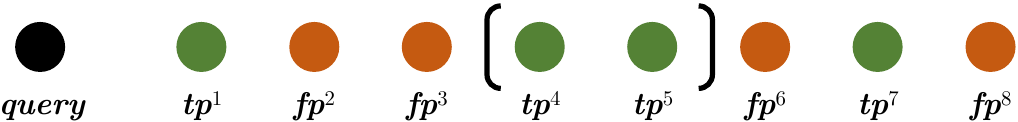}
\end{figure}

The above rank list has the duplicated true positives ($d\left(\bm{q}, \bm{\mathit{tp}}^{4,2}\right) = d\left(\bm{q}, \bm{\mathit{tp}}^{5,2}\right)$). If a swap happens between them and $\bm{\mathit{fp}}^6$, the rank list will become:
\begin{figure}[H]
    \centering
    \includegraphics{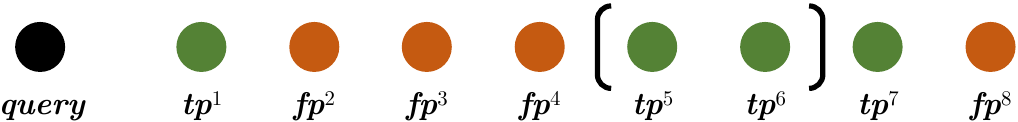}
\end{figure}

where $i, m$ of the duplicated true positives are both increased by $1$. Obviously, the lower bound is still tight.

\paragraph{Case 2: Mixed positives.} It is tricky when true positives and false positives have the same distance with query (we call them mixed positives). The sorting algorithm to produce final rank list also has impact to determine ranks of these mixed positives. Whether the lower bound is tight in this case is hard to judge, but our lower bound is still valid since above corollary still makes sense.

\paragraph{Case 3: Rank list with gaps.} If there are gaps in between two distances, \eg, $d\left(\bm{q}, \bm{\mathit{fp}}^i\right) \ll d\left(\bm{q}, \bm{\mathit{tp}}^{i+1}\right)$ (this usually happens on outliers), then $\mathit{AP}$ will be far from the lower bound. Please refer to \cref{Sec.Edge} for details.

In conclusion, under the above assumptions Asm. 1, 2, 3, our lower bound is tight. Meanwhile, our lower bound also covers common cases as in the above discussion and makes a strong connection to $\mathit{AP}$.

\subsection{Edge Cases of the Lower Bound}
\label{Sec.Edge}
Some edge cases when the lower bound is far way from the $\mathit{AP}$ are further revealed according to the above analysis.

\paragraph{Edge Case 1.} A huge amount of samples are hashed to the same binary code. Based on Case 1 and 2 in \cref{Sec.Tight}, we will observe many duplicated or mixed samples. Their ranks will be increased / decreased simultaneously, and AP significantly changes along with them. The rank list is now be treated as \enquote{unstable}. Intuitively, when a hash-model has poor performance, it could not distinguish differences between samples and simply hashes them to the same code, and such a model will produce \enquote{unstable} rank lists.

\paragraph{Edge Case 2.} Gaps in rank list. In the above example, if $d\left(\bm{q}, \bm{\mathit{fp}}^6\right) \ll d\left(\bm{q}, \bm{\mathit{tp}}^{7, 3}\right) = \max{d\left(\bm{q}, \bm{\mathit{TP}}\right)}$, $\max{d\left(\bm{q}, \bm{\mathit{TP}}\right)}$ needs to be significantly decreased until a swap happens between $\bm{\mathit{fp}}^6$ and $\bm{\mathit{tp}}^{7, 3}$ to influence $\mathit{AP}$. Therefore, $\mathit{AP}$ is high but $\frac{\min{\left(\bm{q}, \bm{\mathit{fp}}\right)}}{\max{\left(\bm{q}, \bm{\mathit{tp}}\right)}}$ is low. A potential reason leads to this edge case is outliers.

\section{Criterion of Hash Codes' Performance}
\label{Sec.Criterion}
Hashing techniques are widely applied in common multimedia tasks. To determine hash codes' performance, the direct way is to adopt evaluation metrics used in these tasks \eg retrieval precision, recall or recognition accuracy. In this section, we give the detailed explanation on how to adopt $\mathit{AP}$ to cover the above typical metrics.

\paragraph{Retrieval.} Common evaluation metrics in retrieval can be derived from $\mathit{AP}$. Specifically,

\setlist[itemize]{leftmargin=*}
\begin{itemize}
  \item Precision at rank $i$ equals to $\frac{i - m}{i}$. The corollary in \cref{Sec.Proof} exactly applies to it.
  \item Recall at rank $i$ equals to $\frac{i - m}{\lvert \bm{T} \rvert}$ where $\bm{T}$ is set of all groundtruth samples. Therefore, $\lvert \bm{T} \rvert$ is a constant and recall increases $\mathit{iff} \; m$ decreases.
  \item F-score equals to $\frac{2}{\mathit{recall}^{-1} + \mathit{precision}^{-1}} = \frac{2}{i / \left(i - m\right) + \lvert \mathbf{T} \rvert / \left(i - m\right)} = \frac{2\left(i - m\right)}{i + \lvert \mathbf{T} \rvert}$.
\end{itemize}

All the above metrics are reversely proportional to $m$. Therefore, analysis in \cref{Sec.Proof} is also valid to them.

\paragraph{Recognition.} There are many ways to adopt hash codes for recognition, and we discuss two main approaches here. 1) $k$NN based recognition is very similar to retrieval. In this scenario, prediction is obtained by voting among top-$k$ retrieved results. If there are more than half true positives in rank list, then the prediction will be correct, otherwise it will be wrong. Therefore, the recognition accuracy can be treated as a special case where we hold a rank list of all samples and expect $m < \nicefrac{k}{2}$ when $i = k$, which is induced to the goal of increasing $\mathit{AP}$.

As for logistic regression models for recognition, a learnable weight is adopted to make predictions: $\bm{b} \xrightarrow{\bm{w}} \bm{\mathit{cls}},\; \bm{w} \in \mathbb{R}^{C \times h}$ where the output $\bm{\mathit{cls}}$ is $C$-dim scores for each class. We could treat $\bm{w}$ as $C$ class-prototypes. Each prototype $\bm{w}^i$ is a $h$-dim vector belongs to class $i$ while $\bm{\mathit{cls}}_i$ measures inner-product similarity between prototype and hash code $\bm{b}$. Inner-product similarity is an approximation of Hamming distance and $\bm{w}$ is trained by cross-entropy objective, resulting in maximizing $\bm{\mathit{cls}}_c$ while minimizing $\bm{\mathit{cls}}_{\mathit{other}}$. Such a goal is equivalent to maximizing intra-class compactness and inter-class distinctiveness.

\subsection{Potential Use Cases of the Lower Bound}
Exploring use cases of the proposed lower bound would reveal significance and value of this work. Since Figure 1 and 5 in main paper and the above analysis tell us the value of $\frac{\min{\mathcal{D}_\mathit{inter}}}{\max{\mathcal{D}_\mathit{intra}}}$ partially reflects hash codes' performance, we would quickly evaluate model's performance by such a metric other than calculating $\mathit{AP}$ or accuracy which is time consuming. Therefore, adopting it as a performance indicator benefits for model tuning or selection, including but not limited to parameter search.

\section{Multivariate Bernoulli Distribution}
\label{Sec.MVB}
In this paper, we treat the posterior of hash codes $p\left(\bm{\mathcal{B}} \mid \bm{\mathcal{X}}\right)$ is under Multivariate Bernoulli distribution $\MVBer$. This is based on our observations of hash-model outputs. To demonstrate this claim, a toy $2$ bits CSQ model is trained on $2$ randomly chosen classes of CIFAR-10. We then plot $\bm{\ell}$ extracted by CSQ in \cref{Fig.Demo}.

\begin{figure}[H]
    \centering
    \begin{minipage}[b][][b]{0.42\textwidth}
        \centering
        \includegraphics[width=0.75\linewidth]{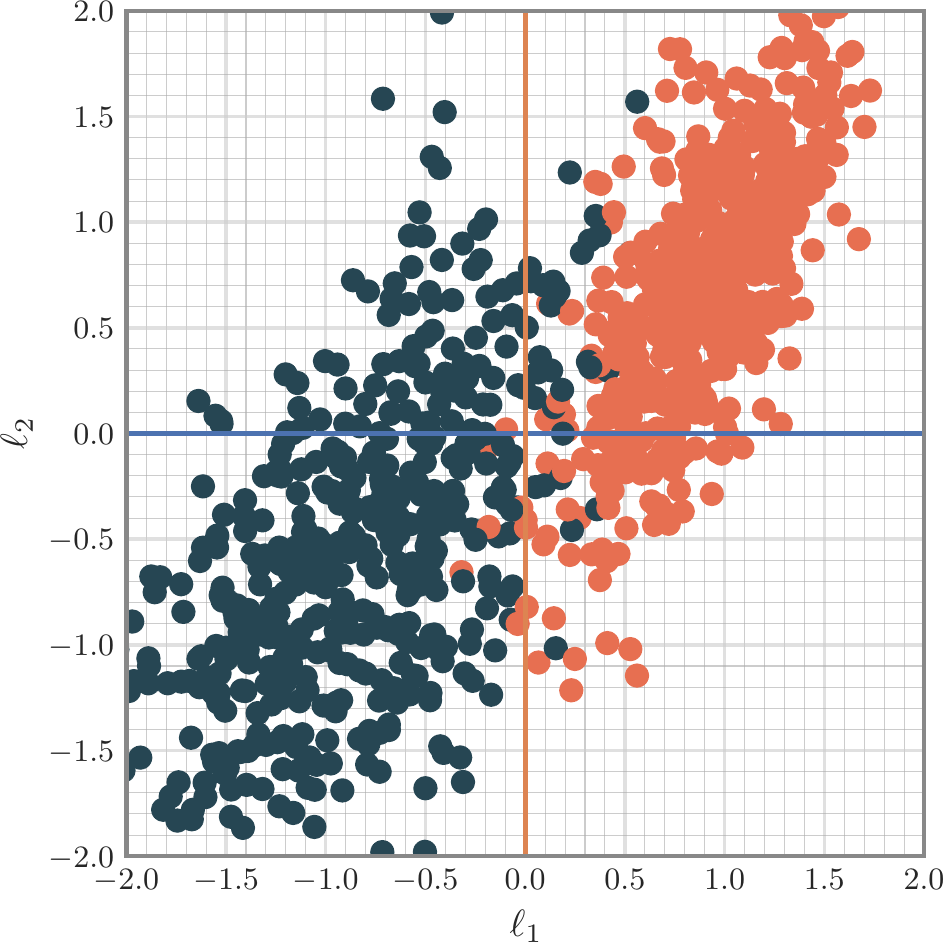}
        \caption{$\bm{\ell}$ extracted from a toy $2$ bits CSQ model with $2$ classes' samples of CIFAR-10. Correlations exist in $\bm{b}_1, \;\bm{b}_2$.}
        \label{Fig.Demo}
    \end{minipage}
    \hfill
    \begin{minipage}[b][][b]{0.52\textwidth}
        \centering
        \begin{tabular}{@{}|cc|cc|@{}}\toprule
                       & $p\left(\bm{b}_1\right)$ & $\bm{b}_1 < 0$ & $\bm{b}_1 > 0$  \\
        $p\left(\bm{b}_2\right)$   &            & $0.473$        & $0.527$         \\ \midrule
        $\bm{b}_2 > 0$ & \multicolumn{1}{c|}{$0.525$}    & \multicolumn{1}{c|}{$0.079$}        & \multicolumn{1}{c|}{$0.446$}         \\ \cmidrule(l){3-4}
        $\bm{b}_2 < 0$ & \multicolumn{1}{c|}{$0.475$}    & \multicolumn{1}{c|}{$0.394$}        & \multicolumn{1}{c|}{$0.081$}         \\ \bottomrule
        \end{tabular}
        \caption{Estimated joint and marginal probabilities. Obviously, joint probability is not equal to product of marginal probability.}
        \label{Tab.Demo}
    \end{minipage}
\end{figure}

Obviously, features are concentrated on the upper-right and lower-left regions on the figure. This indicates that hash codes of these samples are tend to be $\left(\text{+1+1}\right)$ or $\left(\text{-1-1}\right)$ while very few samples are hashed to $\left(\text{+1-1}\right)$ or $\left(\text{-1+1}\right)$. Based on this pattern, we could infer that if $\bm{b}_1$ and $\bm{b}_2$ have a positive correlation. We also count frequencies of all $4$ possible code combinations and calculate the estimated joint and marginal probability in \cref{Tab.Demo}, which confirms our observation. As the figure and the table show, joint probabilities are not equal to product of marginal probabilities, meaning that correlation exists among codes. Therefore, $\MVBer$ is suitable to model the posterior of hash codes. Previous works are not sufficient to model it since they process each hash bit independently.

For instance, if $h = 4$, joint probability of $\bm{b} = \left(\text{+1+1-1-1}\right)$ is formulated as:
\begin{equation}
\label{Eq.Demo}
    p\left(\bm{b} = \left(\text{+1+1-1-1}\right)\right) = p\left(\bm{b}_1<0,\bm{b}_2<0,\bm{b}_3>0,\bm{b}_4>0\right).
\end{equation}
Unfortunately, estimating this joint probability is difficult~\cite{MVB3,MVB1,MVB2}. Furthermore, all $2^h$ joint probabilities are required to be estimated in order to control hash codes precisely. Therefore, we propose our posterior estimation approach to tackle above difficulties.

\section{Demonstration of Posterior Estimation}
\label{Sec.Est}
We continue to use the example in \cref{Eq.Demo} to explain how we build our surrogate model to perform posterior estimation. Firstly, $4$ bits hash codes have all $16$ choices from $\left(\text{-1-1-1-1}\right)$ to $\left(\text{+1+1+1+1}\right)$. Therefore, $\bm{o}$ has $16$ entries where each entry in $\bm{o}$ will be used to estimate a specific joint probability. We provide a simple demonstration in \cref{Fig.Posterior} to show how we process it.

\begin{figure}[H]
    \centering
    \includegraphics{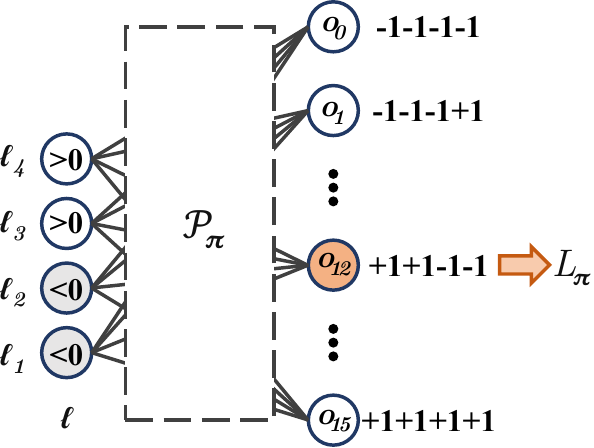}
    \caption{Demonstration of how $\mathcal{P}_{\bm{\pi}}$ performs posterior estimation.}
    \label{Fig.Posterior}
\end{figure}

Specifically, if we feed $\bm{\ell}$ where $\bm{\ell}_1 < 0,\;\bm{\ell}_2 < 0,\;\bm{\ell}_3 > 0,\;\bm{\ell}_4 > 0$ \ie $\bm{b} = \left(\text{+1+1-1-1}\right)$, then we could directly find output $\bm{o}_{12} \estimates p\left(\bm{b}_1 < 0, \bm{b}_2 < 0, \bm{b}_3 > 0, \bm{b}_4 > 0\right)$ where $12 = 1 \cdot 2^3 + 1 \cdot 2^2 + 0 \cdot 2^1 + 0 \cdot 2^0$ and perform MLE on it to train model $\pi$.

\subsection{Implementation of Posterior Estimation}
Correspondingly, we provide a minimal PyTorch-style implementation of $\mathcal{P}_{\bm{\pi}}$ and its training objective in \cref{Fig.Code}. We first give one of a implementation of non-linear $\bm{\ell}$ to $\bm{o}$ mapping in line \textnumero~\texttt{\ref{Code.NonLinearStart}$\sim$\ref{Code.NonLinearEnd}}. Then, the main body of surrogate model (line \textnumero~\texttt{\ref{Code.SurrogateStart}$\sim$\ref{Code.SurrogateEnd}}) shows how we process codes in blocks (line \textnumero~\texttt{\ref{Code.BlockCode}}) and how we convert every $8$ bits inputs to $256$ categorical outputs and calculate loss \wrt targets (line \textnumero~\texttt{\ref{Code.Loss}}).

\begin{figure*}[!h]
\centering
\begin{code}
# One of the implementations for non-linear  |\label{Code.NonLinearStart}|
# mapping to be used in surrogate model.
_nonLinearNet = lambda: nn.Sequential(
    nn.Linear(8, 256),
    nn.SiLU(),
    nn.Dropout(0.5),
    nn.Linear(256, 256)
)    |\label{Code.NonLinearEnd}|

class Surrogate(nn.Module):  |\label{Code.SurrogateStart}|
    """
    Surrogate model to estimate MVB and further provide gradients.

    Args:
        bits (int): Length of hash codes.
    """
    def __init__(self, bits: int):
        super().__init__()
        # Number of blocks.
        self._u = bits // 8
        self._bits = bits
        self._net = nn.ModuleList(
            _nonLinearNet() for _ in range(self._u)
        )
        # binary to decimal multiplier
        self.register_buffer("_multiplier",
            (2 ** torch.arange(8)).long())
        self._initParameters()

    def forward(self, x: Tensor, t: Tensor = None) -> (Tensor):
        """
        Module forward.
        Args:
            x (Tensor): [N, h] Model outputs before `.sign()`
                (un-hashed values).
            t (Tensor, Optional): [N, h] Target hash result of x.
        Return:
            Tensor: [] If t is None, produces loss to
                train surrogate model. Otherwise,
                produces loss to calculate surrogate
                gradient w.r.t. x.
        """
        loss = list()
        t = t or x
        # split hash codes into `u` pieces, each 8 bits.
        for subNet, org, tgt in zip(self._net,
            torch.chunk(x, self._u, -1),
            torch.chunk(t, self._u, -1)): |\label{Code.BlockCode}|
            # feed each 8 bits codes into i-th non-linear model
            # to get [256] predictions.
            prd = subNet(org)
            y = ((tgt > 0) * self._multiplier).sum(-1)  |\label{Code.Loss}|
            loss.append(F.cross_entropy(prd, y))
        return sum(loss)  |\label{Code.SurrogateEnd}|
\end{code}
\caption{Minimal implementation of our proposed surrogate model for posterior estimation.}
\label{Fig.Code}
\end{figure*}

\begin{figure*}[t]
    \centering
    \includegraphics[width=\linewidth]{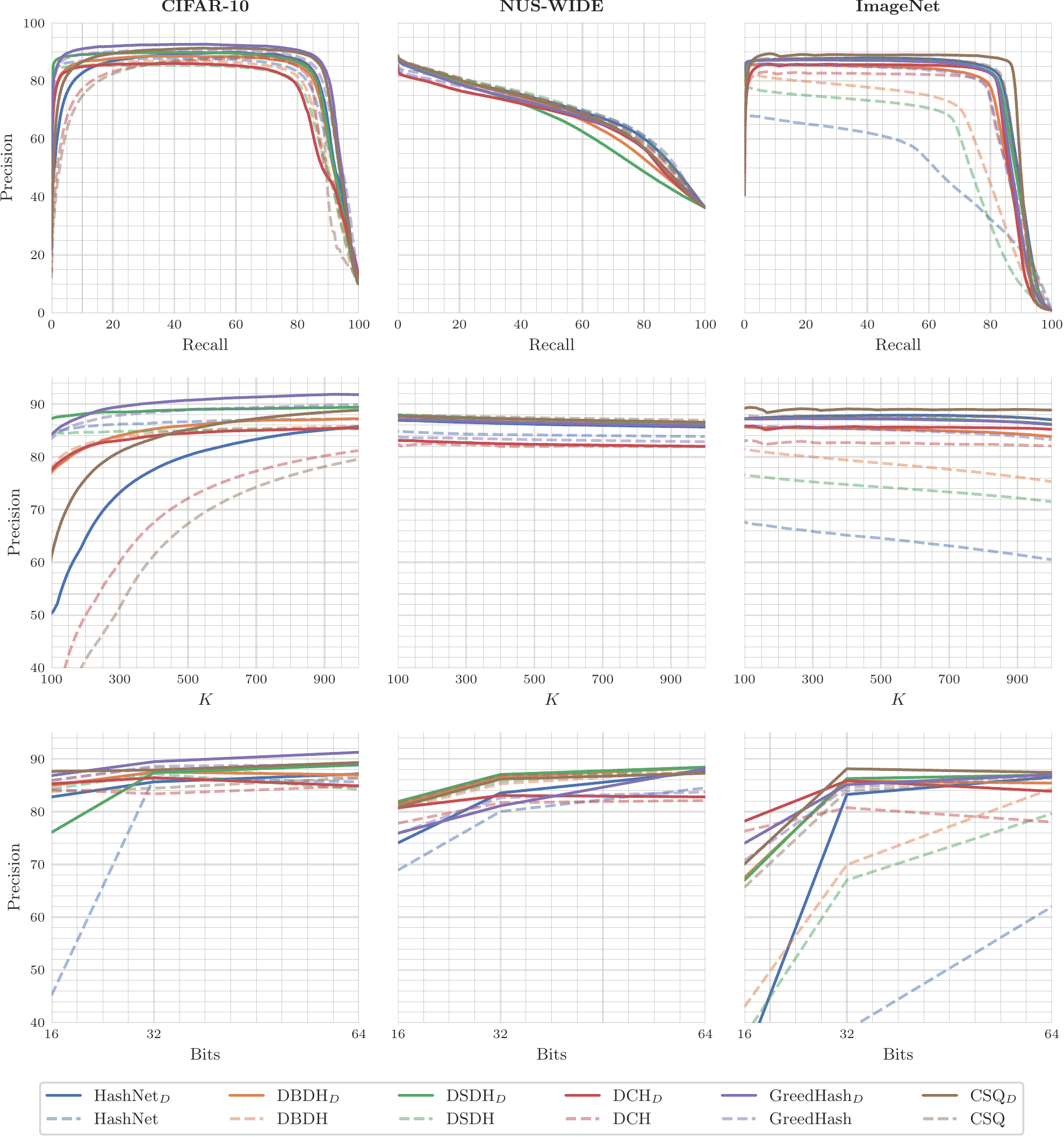}
    \caption{\textbf{First row}: Precision-Recall (P-R) curves for three datasets on $64$ bits. With our method integrated, most of methods' P-R curves slightly move to upper right. \\
    \textbf{Second row}: Precision@$K$ (P@$K$, $K$ from $100$ to $1,000$) curves for three datasets on $64$ bits. With our method integrated, most of methods' P@$K$ curves slightly go up. \\
    \textbf{Third row}: Precision@$H = 2$ (retrieval inside the $H = 2$ Hamming ball) \wrt code-length curves on three datasets.}
    \label{Fig.AllCurves}
\end{figure*}

\section{Additional Experiments}
\label{Sec.AdditionExp}
We continue to conduct experiments for more comparisons and ablations. Before this, we firstly provide more implementation details as a supplement to main paper.

As \cref{Fig.Code} shows, we choose a normal two layer model with SiLU activation as $\mathcal{P}_{\bm{\pi}}$ to perform posterior estimation. To avoid over-fitting, we insert a dropout layer with $0.5$ probability between layers. We train all methods for $100$ epoches with batch-size $64$, and learning rate is exponential decayed by $0.1$ every $10$ epoches. Most of the methods could achieve $95\%$ performance after $30$ epoches. We do not perform grid-search on hyper-parameters to obtain the highest results since it is not our topic in this paper.

To generate centers of specific classes for training (Alg.1 Line.2), we want a \textit{perfect} generator which could produce finite number of codes with maximized pairwise Hamming distance among each other. Such a problem is formulated as the $A_q\left(n, d\right)$ problem~\cite{aqnd-problem1,aqnd-problem2} and such perfect codes are recognized to attain the \textit{Hamming bound}~\cite{HammingBound3,HammingBound1,HammingBound2}. However, this is extremely hard to tackle. Therefore, we conduct ablation study on a series of error-correction codes which aims at approaching the above goal to pick the best one, which will be explained below.

\subsection{Performance on Large Dataset}
To evaluate performance of our proposed method on large-scale data, we conduct a new experiment on the whole ImageNet (ILSVRC 2012). Specifically, this dataset includes $1,000$ classes with a relatively large training set ($1.2M$ images, $\smallsim\!\!10\times$ larger than datasets adopted in main paper). The validation set has $50,000$ images and $50$ for each class for retrieval. We randomly pick $5$ images of each class as queries ($5,000$ in total) and the remaining is adopted to formulate the base split ($45,000$ in total). Networks are trained for $20$ epoch and we only update the last hash layers since the backbone is already pre-trained on ImageNet. Other settings are the same with main paper. From \cref{Tab.LargeSet}, we could see our method is also effective when trained with large dataset.

\subsection{Quantitative Comparisons}

\paragraph{Precision-Recall Curves.} To indicate retrieval performance over the whole rank list, we plot precision-recall curves for all methods in the first row of \cref{Fig.AllCurves} for three datasets on $64$ bits. We obtain similar results with Tab.2 where all the methods obtain performance gain (curves move to upper-right) after integrating our methods.

\paragraph{Precision@$K$ Curves.} To visualize retrieval precision of top retrieved samples, Precision@$K$ ($K$ from $100$ to $1,000$) curves for three datasets on $64$ bits are placed in the second row of \cref{Fig.AllCurves}. Similar to P-R curves, most of the methods receive a precision increase and curves slightly move up.

\begin{wraptable}{R}{0.5\textwidth}
\caption{Retrieval performance on the full ImageNet dataset.}
\label{Tab.LargeSet}
\centering
\begin{tabular}{@{}rccc@{}}
\toprule
Method                & 16 bits & 32 bits & 64 bits \\ \midrule
HashNet                   & $36.9$ &$41.2$  & $44.7$  \\
HashNet\rlap{$_D$}        & $\bm{55.0}$\diffH{18.1}&$\bm{56.1}$\diffH{14.9} &$\bm{61.8}$\diffH{17.1}  \\ \midrule
DBDH                   & $37.4$ &$41.1$  & $49.1$  \\
DBDH\rlap{$_D$}        & $\bm{57.2}$\diffH{19.8}&$\bm{57.9}$\diffH{16.8} &$\bm{60.4}$\diffH{11.3}  \\  \midrule
DSDH                   & $39.3$ &$48.4$  & $54.2$  \\
DSDH\rlap{$_D$}        & $\bm{56.0}$\diffH{16.7}&$\bm{59.1}$\diffH{10.7} &$\bm{62.0}$\diffH{7.8}  \\  \midrule
DCH                   & $60.2$ &$62.3$  & $64.1$  \\
DCH\rlap{$_D$}        & $\bm{61.9}$\diffH{1.7}&$\bm{62.9}$\diffH{0.6} &$\bm{64.4}$\diffH{0.3}  \\  \midrule
GreedHash                   & $62.7$ &$63.1$  & $64.9$  \\
GreedHash\rlap{$_D$}        & $\bm{63.1}$\diffH{0.4}&$\bm{64.0}$\diffH{0.9} &$\bm{65.9}$\diffH{1.0}  \\  \midrule
CSQ                       & $64.6$ &$65.0$  & $65.7$    \\
CSQ\rlap{$_D$}            & $\bm{65.7}$\diffH{1.1}  & $\bm{66.2}$\diffH{1.2} & $\bm{66.4}$\diffH{0.7} \\ \bottomrule
\end{tabular}
\end{wraptable}

\paragraph{Precision@$H=2$ Curves.} To show retrieval performance inside $H=2$ Hamming ball, we draw Precision@$H=2$ \wrt code-length curves on the third row of \cref{Fig.AllCurves}. With our methods, precision inside Hamming ball is also increased. This indicates that our method pushes more true positives close to queries than the original methods.

\subsection{Ablation Study}
\subsubsection{Posterior Estimation on Longer Bits.}
Due to space limitation in main paper, we place explanation on how we estimate $\MVBer$ by na{\"i}ve way here (Sec.6.2). Then, an extra experiments on longer bits \ie $16$ bits will be conducted to validate the effectiveness of blocked estimation.

\begin{figure}[t]
    \centering
    \includegraphics[width=\linewidth]{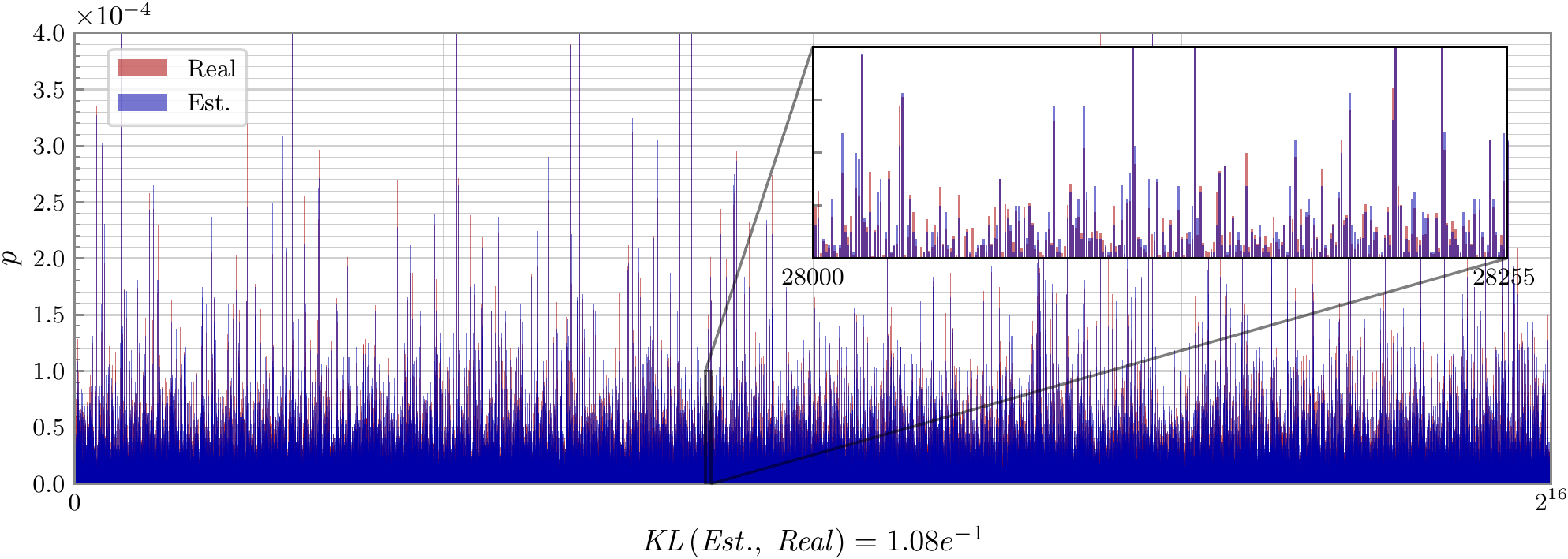}
    \caption{Posterior estimation on $16$ bits codes ($65,536$ joint probabilities) with a block=$2$ surrogate model. We randomly show a zoomed-in view with interval of $256$ on the upper-right. Ours $\mathit{KL}$: $0.1084$, na{\"i}ve's $\mathit{KL}$: $0.4965$.}
    \label{Fig.Sanity16}
\end{figure}

For na{\"i}ve estimation on $\MVBer$, dependence among $\bm{b}$ is ignored. Specifically, we directly count frequency of value to be greater than zero on each bit of codes to approximate $p\left(\bm{b}_i > 0\right)$ and $p\left(\bm{b}_i < 0\right)$. Then, any joint probability is estimated by the product of marginal probabilities, \eg $p\left(\bm{b} = \text{+1+1-1-1}\right) \estimates p\left(\bm{b}_4 > 0\right) \cdot p\left(\bm{b}_3 > 0\right) \cdot p\left(\bm{b}_2 < 0\right) \cdot p\left(\bm{b}_3 < 0\right)$.

To train on $16$ bits codes, we adopt a block=$2$ surrogate model where the first block is used to produce the first $8$ bits codes and the second one is for the last $8$ bits (as in \cref{Fig.Code} implements). Then, for all $65,536$ joint probabilities, we obtain them by the Cartesian product of two models' predictions, which is shown in \cref{Fig.Sanity16}. The figure shows similar results as the $8$ bits experiment in main paper. Our model achieves a much lower $\mathit{KL}$ than na{\"i}ve one, showing that performing block estimation will not introduce significant bias to estimate posterior (Ours: $0.1084$, na{\"i}ve: $0.4965$).

\begin{table*}[!h]
\caption{mAP comparisons with different pre-defined centers on three benchmark datasets for $16, 32, 48$ bits codes. Values on lower right is minimum pairwise Hamming distances among centers.}
\label{Tab.Center}
\centering
\resizebox{\linewidth}{!}{
\begin{tabular}{@{}rccccccccc@{}}
\toprule
\multirow{2}{*}{Center}&\multicolumn{3}{c}{\textbf{CIFAR-10}}&\multicolumn{3}{c}{\textbf{NUS-WIDE}}&\multicolumn{3}{c}{\textbf{ImageNet}}\\ \cmidrule(lr){2-4}\cmidrule(lr){5-7}\cmidrule(lr){8-10}
                       &  $16$ bits &  $32$ bits & $48$ bits &  $16$ bits &  $32$ bits & $48$ bits &  $16$ bits &  $32$ bits & $48$ bits \\ \midrule
Reed-Solomon           &  $87.7_{6}$    &  $89.0_{14}$   & $89.8_{21}$     &  $83.1_{5}$    &  $84.9_{12}$    & $85.0_{19}$    &  $88.5_{4}$    &  $89.5_{10}$    & $90.2_{16}$     \\
BCH                    & $88.3_{7}$ & $89.2_{16}$ & $89.9_{21}$ & $83.0_{4}$ & $84.8_{12}$ & $85.1_{19}$ & $87.3_{2}$ & $89.0_{9}$ & $89.8_{14}$ \\

Hadamard               &  $88.7_{8}$    &  $89.2_{16}$    & $89.5_{19}$    &  $83.3_{8}$    &  $85.3_{16}$    & $85.0_{19}$    & $87.9_{3}$ & $89.0_{8}$ & $89.8_{14}$     \\ \bottomrule
\end{tabular}}
\end{table*}

\subsubsection{Impact of Different Kinds of Pre-defined Centers.}
To generate pre-defined centers as separate as possible as targets to train model, we compare a series of error-correction codes. The experiment includes Reed-Solomon code, BCH code and Hadamard code. Four kinds of codes have different pairwise Hamming distance. According to our main proposition in paper, if the minimum pairwise Hamming distance between centers are smaller, the lower bound of hash codes' performance is tend to be correspondingly lower. To show this phenomenon, minimum pairwise Hamming distance of codes as well as mAP are measured for CSQ$_D$, which is placed in \cref{Tab.Center}. It is worth noting that Hadamard code is not applicable when number of centers is large or code-length is not order of $2$, in this case we report results with randomly generated codes. The table confirms the positive correlation between minimum pairwise Hamming distance and performance. Meanwhile, performance differences among three kinds of codes are small. Therefore in practice, we could choose any of them.

\end{document}